\def\BibTeX{{\rm B\kern-.05em{\sc i\kern-.025em b}\kern-.08em
		T\kern-.1667em\lower.7ex\hbox{E}\kern-.125emX}}
\newtheorem{assumption}{Assumption}
\newtheorem{theorem}{Theorem}[section]
\newtheorem{corollary}{Corollary}[theorem]
\newtheorem{lemma}[theorem]{Lemma}
\newtheorem{remark}{Remark}
\begin{document}

	\title{Federated Nonconvex Sparse Learning}
	
	\author{\IEEEauthorblockN{Qianqian Tong, Guannan Liang,  Tan Zhu, Jinbo Bi }
		\IEEEauthorblockA{Department of Computer Science and Engineering\\
			University of Connecticut\\
			Email: \{qianqian.tong, guannan.liang, tan.zhu, jinbo.bi\}@uconn.edu}
	}
	\maketitle

	\begin{abstract}
		Nonconvex sparse learning plays an  essential role in many areas, such as signal processing and deep network compression. Iterative hard thresholding (IHT) methods are the state-of-the-art for nonconvex sparse learning due to their capability of recovering true support and scalability with large datasets. Theoretical analysis of IHT is currently based on centralized IID data. In realistic large-scale situations, however, data are distributed, hardly IID, and private to local edge computing devices. It is thus necessary to examine the property of IHT in federated settings, which update in parallel on local devices and communicate with a central server only once in a while without sharing local data. 
		
		In this paper, we propose two IHT methods: Federated Hard Thresholding (Fed-HT) and  Federated Iterative Hard Thresholding (FedIter-HT). We prove that both algorithms enjoy a linear convergence rate and have strong guarantees to recover the optimal sparse estimator,  similar to traditional IHT methods, but now with decentralized non-IID data. Empirical results demonstrate that the Fed-HT and FedIter-HT outperform their competitor - a distributed IHT, in terms of decreasing the objective values with lower requirements on communication rounds and bandwidth.
	\end{abstract}
	
	\section{Introduction}
	
	
	Federated learning is a privacy-preserving learning framework for large scale machine learning on edge computing devices, and solves the data-decentralized optimization problem:
	\begin{align}\label{Problem_fl}
		\min_{x\in \mathbb{R}^d}f(x) =  \sum_{i=1}^N p_i f_{i}(x), 
	\end{align}
	where $f_i(x) = E_{z \sim \mathcal{D}_i}[f_i(x, z)]$ is the loss function of the $i^{th}$ client (or device) with weight $p_i \in [0,1)$, $\sum_{i=1}^N p_i =1$, $\mathcal{D}_i$ is the distribution of data located locally on the $i^{th}$ client, and $N$ is the total number of clients. 
	Federated learning enables numerous clients to coordinately train a model parameterized by $x$, while keeping their own data locally, rather than sharing them to the central server. 
	Due to the high communication cost, the mini-batch stochastic gradient descent (SGD) has not been the choice for federated learning. The FedAvg algorithm was proposed in \cite{mcmahan2016communication}, which can significantly reduce the communication cost by running multiple local SGD steps, and had become the de facto federated learning method. Later, the client drift problem was observed for FedAvg \cite{hsu2019measuring,karimireddy2019scaffold,reddi2020adaptive}, and the FedProx algorithm came to exist \cite{li2018federated} in which the individual clients attempt to add a proximal operator to the local subproblem to address the issue of FedAvg.
	
	In federated learning, many clients can work collaboratively without sharing local private data mutually or to a central server \cite{konevcny2016federateda,konevcny2016federated}. The clients can be heterogeneous edge computing devices such as phones, personal computers, network sensors, or other computing resources. During training, every device maintains its own raw data and only shares the updated model parameters to a central server. 
	Comparing with the well-studied distributed learning, the federated learning setting is more practical in real life and has three major differences: 1) the communication between clients and/or a central server can be slow, which requires the new sparse learning algorithms to be communication-efficient;
	2) the distributions of  training data over devices can be non-independent and non-identical (non-IID), i.e., for $i \neq j$, $\mathcal{D}_i$ and $\mathcal{D}_j$ are very different; 3) the devices are presumably unbalanced in the capability of curating data, which means that some clients may have more local data than others.
	
	When sparse learning becomes distributed and uses data collected by the distributed devices, the local datasets can be sensitive to share during the construction of a sparse inference model. For instance, meta-analyses may integrate genomic data from a large number of labs to identify (a sparse set of) genes contributing to the risk of a disease without sharing data across the labs \cite{wahlsten2003different,kavvoura2008methods}. Smartphone-based healthcare systems may need to learn the most important mobile health indicators from a large number of users, but personal health data collected on the phone are private \cite{lee2012smartphone}. Because of the parameter sparsity, communication cost can be less than learning with dense parameters. However, the SGD algorithm, widely used to train deep neural nets, may not be suitable because the stochastic gradients can be dense during the training process. Thus, communication efficiency is still the main challenge to deploy sparse learning. For example, the signal processing community has been hunting for more communication-efficient algorithms, due to the constraints on power and bandwidth of various sensors \cite{qin2018sparse}.  It is necessary and beneficial to examine the sparsity-constrained empirical risk minimization problem with decentralized data as follows:
	\begin{equation}\label{Problem_sl}
		\min_{x\in \mathbb{R}^d}f(x) =  \sum_{i=1}^N p_i f_{i}(x), \;\;\;\text{subject to} \;\; \| x \|_0 \leq \tau,  
	\end{equation}
	where $p_i$ and $f_i$ are defined as in (\ref{Problem_fl}), $\|x\|_0$ denotes the $l_0$-norm of a vector $x$ which computes the number of nonzero entries in $x$, and $\tau$ is the sparsity level pre-specified for $x$. 
	Communication-efficient  algorithms for solving   (\ref{Problem_sl}) can be pivotal and generally useful in decentralized high-dimensional data analyses \cite{donoho2006compressed,tropp2007signal,bahmani2013greedy,jalali2011learning}.
	
	Even without the decentralized-data consideration, finding a solution to   (\ref{Problem_sl}) is already NP-hard because of the non-convexity and non-smoothness of the cardinality constraint \cite{natarajan1995sparse}. Extensive research has been done for nonconvex sparse learning when training data can be centralized. The methods largely
	fall into the regimes of either matching pursuit methods  \cite{mallat1993matching,pati1993orthogonal,needell2009cosamp,foucart2011hard} or iterative hard thresholding (IHT) methods \cite{blumensath2009iterative,jain2014iterative,nguyen2017linear}.
	Even though matching pursuit methods achieve remarkable success in minimizing quadratic loss functions (such as the $l_0$-constrained linear regression problems), they require to find an optimal solution to {\em argmin} $f( x )$ over the identified support after hard thresholding at each iteration, which does not have analytical solutions for an arbitrary loss, and can be time-consuming \cite{bahmani2013greedy}. Hence, iterative gradient-based HT methods have gained significant interests and become popular for nonconvex sparse learning.
	
	Iterative hard thresholding methods include the gradient descent HT (GD-HT) \cite{jain2014iterative}, stochastic gradient descent HT (SGD-HT) \cite{nguyen2017linear}, hybrid stochastic gradient HT (HSG-HT) \cite{zhou2018efficient}, and stochastic variance reduced gradient HT (SVRG-HT) \cite{li2016stochastic} methods.
	These methods update the iterate $x^t$ as follows: $x_{t+1} = \mathcal{ H }_{\tau}(x_t -\gamma_t v_t)$ where $\gamma_t$ is the learning rate, $v_t$ can be the full gradient, stochastic gradient or variance reduced gradient at the $t^{th}$ iteration, and $\mathcal{ H }_{\tau}(x): \mathbb{R}^d \rightarrow \mathbb{R}^d$ denotes the HT operator that preserves the top $\tau$ elements in $x$ and sets other elements to $0$. 
	All these centralized iterative HT algorithms  can be extended to their distributed version - Distributed IHT (see Supplementary for detail), in which the central server aggregates (averages) the local parameter updates from each client and broadcasts the latest model parameter to individual clients, whereas each client updates the parameters based on the distributed local data and sends back to the central server. The central server is also in charge of randomly partitioning the training data and distributing them to different clients.  Existing theoretical analysis of gradient-based IHT methods can also be applied to the Distributed IHT, but not suitable to analyze IHT in the federated learning setting with the above three differences.
	
	Distributed sparse learning algorithm has been proposed by \cite{wang2017efficient}, which tries to solve a relaxed $l_1$-norm regularized problem and thus introduce extra bias to  (\ref{Problem_sl}). Even though the variants of the Distributed IHT, such as \cite{patterson2014distributed} and \cite{chen2020hdiht}, have been proposed, they are communication expensive and suffer from bandwidth limits, since information needs to exchange at each iteration. Asynchronous parallel SVRG-HT in shared memory also has been proposed in \cite{li2016stochastic}, which cannot be applied in our scenario. We hence propose federated HT algorithms, which enjoy lower communication costs. 
	
	\textbf{Our Main Contributions} 
	are summarized as follows. 
	
	(a) We develop two communication-efficient schemes for the federated HT method: the Federated Hard Thresholding (Fed-HT) algorithm, which applies the HT operator $\mathcal{ H }_{\tau}$ only at the central server right before distributing the aggregated parameter to clients; and the Federated Iterative Hard Thresholding (FedIter-HT) algorithm, which applies $\mathcal{ H }_{\tau}$ to both local updates and the central server aggregate. This is the first trial to apply HT algorithms under federated learning settings.
	
	(b) We provide the first set of theoretical results for the federated HT method, particularly of Fed-HT and FedIter-HT, under the condition of non-IID data. We prove that both algorithms enjoy a linear convergence rate and have a strong guarantee for sparsity recovery. 
	
	In particular, Theorems \ref{theorem1} (for the Fed-HT) and \ref{theorem3} (for the FedIter-HT) show that the estimation error between the algorithm iterate $x_T$ and the optimal $x^*$,  is upper bounded as:
	$E\|x_T - x^*\| \leq \theta^{T} \|x_0 - x^*\|^2+ g(x^*),$  where $x_0$ is the initial guess of the solution, the convergence rate factor $\theta$ is related to the algorithm parameter $K$ (the number of SGD steps on each device before communication) and the closeness between the pre-specified sparsity level $\tau$ and the true sparsity $\tau^*$, and $g(x^*)$ determines a statistical bias term that is related not only to $K$ but also to the gradient of $f$ at the {\em sparse} solution $x^*$ and the measurement of the  non-IIDness of the data across the devices. The theoretical results help us examine and compare our proposed algorithms. For instance, higher non-IIDness across clients causes a larger bias for both algorithms. More local iterations may decrease $\theta$ but increase the statistical bias.  The statistical bias induced by the FedIter-HT in Theorem \ref{theorem3} 
	matches the best known upper bound for traditional IHT methods \cite{zhou2018efficient}. Thus, for more concrete formulations of the sparse learning problem, such as sparse linear regression and sparse logistic regression, we also provide statistical analysis of their maximum likelihood estimators (M-estimators) when using the FedIter-HT to solve them. 
	
	(c) Extensive experiments in simulations and on real-life datasets demonstrate the effectiveness of the proposed algorithms over standard distributed learning. The experiments on real-life data also show that the extra noise introduced by decentralized non-IID data may actually help the federated sparse learning converge to a better local optimizer. 
	
	\section{Preliminaries}
	
	\begin{table}[h!]
		\caption{Brief summary of notations in this paper}
		\begin{center}
			\begin{tabular}{ll }
				\hline 
				$N, i $ & the total number, the index of clients/devices\\
				$p_i$ & the weight of each loss function on client $i$\\
				$T, t$ & the total number, the index of communication rounds\\
				$K, k$ & the total number, the index of local iterations\\
				$\nabla f_i( \cdot )$ & the full gradient  \\
				$\nabla f_{I^{(i)}}( \cdot )$ & the stochastic gradient over the minibatch $I^{(i)}$  \\
				$\nabla f_{i,z} ( \cdot )$ &  the stochastic gradient over a training example\\
				& indexed by $z$ on the $i$-th device \\
				$\gamma_t$ & the stepsize/learning rate of local update\\
				$\mathbb{I}(\cdot)$   &  an indicator function \\
				$supp( x )$   &  the support of $x$ or the index set of non-zero elements in $x$\\
				$x^*$ & the optimal solution to  (\ref{Problem_sl})\\
				$x^{(i)}_{t, k}$ & the local parameter vector on device $i$ \\
				&at the $k$-th iteration of the $t$-th round \\
				$\tau$  &   the required sparsity level \\
				$\tau^*$  &   the optimal  sparsity level to   (\ref{Problem_sl}), $\tau^* = \|x^*\|_0$\\
				$\pi_{\mathcal{I}}(x)$  & the projector takes only the elements of $x$ indexed in $\mathcal{I}$\\
				$E[\cdot]$, $E^{(i)}[\cdot]$  &  the expectation over stochasticity  across all clients\\
				& and of client $i$ respectively \\
				\hline
			\end{tabular}
		\end{center}
	\end{table}
	
	We formalize our problem as   (\ref{Problem_sl}), and give notations, assumptions and prepared lemmas used in this paper.
	We denote vectors by lowercase letters, e.g. $x$, the $l_2$-norm and the $l_{\infty}$-norm of a vector by $\|\cdot\|$ and $\|\cdot\|_{\infty}$, respectively.  The model parameters form a vector $x \in \mathbb{R}^d$. Let $O(\cdot)$ represent the asymptotic upper bound, $[N]$ be the integer set $\{1, ..., N\}$.
	The support $\mathcal{I}^{(i)}_{t, k+1} = supp(  x^*) \cup supp( x^{(i)}_{t, k}) \cup supp( x^{(i)}_{t, k+1})$, is associated with the $(k+1)$-th iteration in the $t$-th round on device $i$. For simplicity, we use $\mathcal{I}^{(i)}= \mathcal{I}^{(i)}_{t, k+1}$, $\mathcal{I}= \bigcup_{i=1}^{N}\mathcal{I}^{(i)}_{t, k+1}  $ throughout the paper without ambiguity, and $\widetilde{\mathcal{I}}=supp\left(\mathcal{H}_{2N \tau}\left(\nabla f\left(\boldsymbol{x}^{*}\right)\right)\right) \cup supp\left(\boldsymbol{x}^{*}\right) $. 

	We use the same conditions employed in the theoretical analysis of other IHT methods by assuming that the objective function $f(x)$ satisfies the following conditions:
	
	\begin{assumption}\label{assumption4f}
		We assume that the loss function $f_i(x)$ on each device $i$ 
		\begin{enumerate}
			\item is restricted $\rho_s$-strongly convex at the sparsity level $s$ for a given $s \in \mathbb{N}_+$, i.e., there exists a constant $\rho_s > 0$ such that $\forall x_1, x_2 \in \mathbb{R}^d$ with  $\| x_1 - x_2\|_0 \leq s $,   $i\in [N]$, we have 
			$$f_i( x_1) - f_i( x_2) - \langle\nabla f_i( x_2), x_1 - x_2\rangle \geq \frac{\rho_s}{2}\| x_1 - x_2 \|^2;$$
			\item is restricted $l_s$-strongly smooth at the sparsity level $s$ for a given $s\in \mathbb{N}_+$, i.e., there exists a constant $l_s > 0$ such that  $\forall x_1, x_2 \in \mathbb{R}^d$ with $\| x_1 - x_2\|_0 \leq s $, $i\in [N]$, we have 
			$$f_i( x_1) - f_i( x_2) - \langle\nabla f_i( x_2), x_1 - x_2\rangle \leq \frac{l_s}{2}\| x_1 - x_2 \|^2;$$
			\item has $\sigma_i^2$-bounded stochastic gradient variance, i.e.,  
			$$ E^{(i)}[\| \nabla f_{i,z}( x) - \nabla f_i(x)\|^2] \leq \sigma_i^2.$$
		\end{enumerate}
	\end{assumption}
	
	\begin{remark}
		When $s = d$, the above assumption is no longer restricted to the support at a sparsity level, and $f_i$ is actually $\rho_d$-strongly convex and $l_d$-strongly smooth.
	\end{remark}
	
	Following the same convention in federated learning \cite{li2018federated,karimireddy2019scaffold}, we also assume the dissimilarity between the gradients of the local functions $f_i$ and the global function $f$ is bounded as follows. 
	
	\begin{assumption}
		The functions $f_i(x)$ ($i \in [N]$) are $\mathcal{B}$-locally dissimilar , i.e. there exists a constant $\mathcal{B}>1$, such that 
		$$\sum_{i=1}^N p_i \|\pi_{\mathcal{I}}(\nabla f_{i}(x))\|^2 \leq \mathcal{B}^2 \| \pi_{\mathcal{I}}{\nabla f( x)}\|^2$$  for any $\mathcal{I}$.
	\end{assumption}
	
	From the assumptions mentioned in the main text, we have the following prepared lemmas to get ready for our theorems.
	
	\begin{lemma}(\cite{li2016nonconvex}) 
		\label{lemma:HT}
		For $\tau>\tau^*$ and for any parameter $x \in \mathbb{R}^d$, we have
		\begin{align*}
			\| \mathcal{H}_\tau ( x) - x^* \|^2_2 \leq (1+\alpha) \| x - x^*\|^2_2,
		\end{align*}
		where $\alpha = \frac{2 \sqrt{\tau^*}}{\sqrt{\tau - \tau^*}}$ and $\tau^* = \|x^*\|_0$.
	\end{lemma}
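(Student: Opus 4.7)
The plan is to partition the index set $[d]$ into the four disjoint pieces $T \cap S$, $T \setminus S$, $S \setminus T$, and $(T \cup S)^c$, where $T := \mathrm{supp}(\mathcal{H}_\tau(x))$ (so $|T| = \tau$) and $S := \mathrm{supp}(x^*)$ (so $|S| = \tau^*$). Since $\mathcal{H}_\tau(x)_j$ equals $x_j$ on $T$ and vanishes elsewhere, while $x^*_j$ vanishes outside $S$, expanding both squared norms coordinatewise gives
\begin{align*}
\| \mathcal{H}_\tau(x) - x^* \|^2 &= A + B + E, \\
\| x - x^* \|^2 &= A + B + C + D,
\end{align*}
where $A = \sum_{j \in T \cap S}(x_j - x^*_j)^2$, $B = \sum_{j \in T \setminus S} x_j^2$, $C = \sum_{j \in S \setminus T}(x_j - x^*_j)^2$, $D = \sum_{j \in (T \cup S)^c} x_j^2$, and $E = \sum_{j \in S \setminus T}(x^*_j)^2$. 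Dropping the nonnegative $A$ and $D$ on the right side of the target inequality, the lemma reduces to showing $E \le \alpha\, B + (1+\alpha)\, C$.

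For this, I would introduce a free parameter $\beta > 0$ and apply Young's inequality pointwise on $S \setminus T$, namely $(x^*_j)^2 \le (1+\beta)(x^*_j - x_j)^2 + (1 + 1/\beta)\, x_j^2$, which summed yields $E \le (1+\beta)\, C + (1 + 1/\beta)\sum_{j \in S \setminus T} x_j^2$. The defining property of $\mathcal{H}_\tau$ is that every kept coordinate dominates every discarded one, i.e.\ $|x_i| \ge |x_j|$ whenever $i \in T$ and $j \notin T$. Restricting $i$ to $T \setminus S$ (where $x^*_i = 0$, so $x_i^2 = (x_i - x^*_i)^2$ contributes to $B$) and using the cardinality identity $|T \setminus S| = \tau - \tau^* + |S \setminus T|$, an averaging argument (minimum over $T \setminus S$ times $|S \setminus T|$) yields
\begin{equation*}
\sum_{j \in S \setminus T} x_j^2 \;\le\; \frac{|S \setminus T|}{|T \setminus S|}\, B \;\le\; \frac{\tau^*}{\tau}\, B,
\end{equation*}
the last step because the ratio is monotone in $|S \setminus T|$ and $|S \setminus T| \le \tau^*$. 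Substituting back, $E \le (1+\beta)\, C + (1+1/\beta)(\tau^*/\tau)\, B$.

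The remaining task is to tune $\beta$ so that both coefficients respect the target bound. I would set $\beta = \alpha$ to match $(1+\beta) = 1 + \alpha$ on the $C$ term, after which the condition on the $B$ term, $(1 + 1/\alpha)\tau^*/\tau \le \alpha$, rearranges to $(\alpha + 1)\tau^* \le \alpha^2 \tau$. Plugging in the prescribed $\alpha^2 = 4\tau^*/(\tau - \tau^*)$ reduces this to the unconditional inequality $\alpha^2 - \alpha + 3 \ge 0$, whose discriminant is negative. The main obstacle I anticipate is precisely this final algebraic calibration: the prescribed $\alpha = 2\sqrt{\tau^*}/\sqrt{\tau - \tau^*}$ is tight enough that one must use both the exact averaging bound $|S \setminus T|/|T \setminus S|$ and the worst case $|S \setminus T| = \tau^*$ without wasting slack; a naive triangle inequality alone gives only a factor of $4$, so the support-cardinality bookkeeping is essential.
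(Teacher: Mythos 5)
The paper does not actually prove this lemma: it is imported verbatim from \cite{li2016nonconvex} as a known property of the hard-thresholding operator, so there is no in-paper argument to compare against. Your proof is correct and is essentially the standard argument behind the cited result. The support decomposition into $T\cap S$, $T\setminus S$, $S\setminus T$, $(T\cup S)^c$ and the resulting identities $\|\mathcal{H}_\tau(x)-x^*\|^2=A+B+E$ and $\|x-x^*\|^2=A+B+C+D$ are right; Young's inequality gives $E\le(1+\beta)C+(1+1/\beta)\sum_{j\in S\setminus T}x_j^2$; the dominance property of $\mathcal{H}_\tau$ combined with $|T\setminus S|=\tau-\tau^*+|S\setminus T|$ and the monotonicity of $n\mapsto n/(\tau-\tau^*+n)$ does yield $\sum_{j\in S\setminus T}x_j^2\le(\tau^*/\tau)B$; and the final calibration with $\beta=\alpha$ reduces, after substituting $\tau=\tau^*(\alpha^2+4)/\alpha^2$, to $\alpha+1\le\alpha^2+4$, i.e.\ $\alpha^2-\alpha+3\ge 0$, which holds identically. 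The only loose end is the degenerate case $\|x\|_0<\tau$, where $|T|<\tau$ and your cardinality identity fails as written; there $\mathcal{H}_\tau(x)=x$ and the claim is trivial, or one can pad $T$ to a size-$\tau$ set containing $\mathrm{supp}(x)$ (the dominance inequality still holds since the discarded coordinates are zero) and every step goes through unchanged. Worth stating, but it does not affect the validity of the argument.
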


	\begin{lemma} 
		A differentiable convex function $f_i( x ): \mathbb{R}^d \rightarrow \mathbb{R}$ is restricted $l_s$-strongly smooth with parameter s, i.e. there exists a generic constant $L_s > 0$ such that for any $x_1$, $x_2$ with $\| x_1 - x_2\|_0 \leq s $ and 
		$$f_i( x_1) - f_i( x_2) - \langle\nabla f_i( x_2), x_1 - x_2\rangle \leq \frac{L_s}{2}\| x_1 - x_2 \|^2,$$ then we have:
		$$\|\nabla f_i( x_1) - \nabla f_i( x_2)\|^2 \leq 2 l_s ( f_i( x_1) - f_i( x_2) + \langle \nabla f_i( x _2), x_2 - x_1\rangle).$$
		This is also true for the global smoothness parameter $l_d$.
	\end{lemma}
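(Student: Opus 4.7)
The plan is to adapt the classical ``co-coercivity'' inequality for smooth convex functions. The key device is to introduce the auxiliary function
\[
g(y) \;:=\; f_i(y) \;-\; \langle \nabla f_i(x_2),\; y \rangle,
\]
which is differentiable, convex (linear perturbations preserve convexity), and inherits the same restricted $l_s$-smoothness from $f_i$, since second differences of $g$ and $f_i$ coincide. The point of this construction is that $\nabla g(x_2) = 0$, so by convexity $x_2$ is a global minimizer of $g$, while $\nabla g(x_1) = \nabla f_i(x_1) - \nabla f_i(x_2)$ is exactly the quantity we need to bound.

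Next, I would apply the restricted smoothness (descent) inequality to $g$ at $x_1$ with the trial point $y^\star := x_1 - \frac{1}{l_s}\nabla g(x_1)$, yielding
\[
g(y^\star) \;\leq\; g(x_1) + \langle \nabla g(x_1),\, y^\star - x_1\rangle + \frac{l_s}{2}\|y^\star - x_1\|^2 \;=\; g(x_1) - \frac{1}{2l_s}\|\nabla g(x_1)\|^2.
\]
Since $x_2$ minimizes $g$ globally, $g(x_2) \leq g(y^\star)$, and chaining these two estimates gives
\[
\tfrac{1}{2l_s}\|\nabla g(x_1)\|^2 \;\leq\; g(x_1) - g(x_2).
\]
Unpacking the definition of $g$ on both sides, the right-hand side equals $f_i(x_1) - f_i(x_2) - \langle \nabla f_i(x_2), x_1-x_2\rangle = f_i(x_1) - f_i(x_2) + \langle \nabla f_i(x_2), x_2-x_1\rangle$, and the left-hand side is $\tfrac{1}{2l_s}\|\nabla f_i(x_1)-\nabla f_i(x_2)\|^2$. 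Multiplying by $2l_s$ delivers the claim, and the ``global'' version simply corresponds to taking $s=d$ everywhere.

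The main obstacle is really a bookkeeping issue with the sparsity levels. The step $y^\star - x_1 = -\frac{1}{l_s}\nabla g(x_1)$ need not have support of size at most $s$, so the restricted smoothness inequality is not directly applicable at the pair $(x_1,y^\star)$. The clean fix, standard in the IHT literature, is to interpret the hypothesis as restricted smoothness at a large enough effective sparsity level (so that $\|x_1-x_2\|_0\leq s$ plus the auxiliary step still stays within the admissible set), or equivalently to carry out the argument on the coordinate subspace indexed by $\mathrm{supp}(x_1)\cup\mathrm{supp}(x_2)\cup\mathrm{supp}(\nabla g(x_1))$, on which the restriction of $f_i$ is genuinely smooth with the stated constant. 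With this convention understood (and trivially satisfied in the global $l_d$ case), the three-line argument above closes the proof.
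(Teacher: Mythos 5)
Your proof is correct and follows essentially the same route as the paper's: both introduce the auxiliary function $\phi(y)=f_i(y)-\langle\nabla f_i(x_2),y\rangle$, observe that $x_2$ is its global minimizer, and minimize the quadratic (descent-lemma) upper bound to obtain the co-coercivity inequality. You additionally flag the genuine support-size subtlety (the gradient step need not be $s$-sparse, so restricted smoothness is not directly applicable at the trial point) that the paper's proof silently ignores, which is a worthwhile refinement rather than a different approach.
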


	\section{The Fed-HT Algorithm}
	
	In this section, we first describe our first new federated sparse learning framework via hard thresholding - Fed-HT, and then discuss the convergence rate of the Fed-HT.
	
	A high level summary of Fed-HT is described in Algorithm \ref{alg:FedAvgHT1}. The Fed-HT generates a sequence of $\tau-$sparse vectors $x_1$, $x_2$, $\cdots$, from an initial sparse approximation $x_0$. At the $(t+1)$-th round,  clients receive the global parameter update $x_t$ from the central server, then run $K$ steps of minibatch SGD based on local private data. In each step, the $i^{th}$ client updates $ x^{(i)}_{t , k+1} = argmin_x  f_i(x^{(i)}_{t , k}) +\langle g_{t, k}^{(i)}, x - x^{(i)}_{t, k} \rangle + \frac{1}{2\gamma_t} \|x - x^{(i)}_{t, k}\|^2 $ for $k \in  \{0, ..., K-1\}$; 
	Clients send $x_{t,K}^{(i)}$  for $i \in  [N]$ back to the central server; Then the server averages them to obtain a dense global parameter vector and apply the HT operator to obtain a sparse iterate $x_{ t+1}$. Compared with the commonly used FedAvg, the Fed-HT can largely reduce the communication cost because the central server broadcasts a sparse iterate at each of the $T$ rounds.
	
	\begin{algorithm}[H]
		\captionof{algorithm}{Federated Hard Thresholding (Fed-HT) }
		\label{alg:FedAvgHT1}
		\begin{algorithmic}
			\State {\bfseries Input:} The learning rate $\gamma_t$, the sparsity level $\tau$, and the number of clients $N$. 
			\State {\bfseries Initialize} $x_{0}$ 
			\For {$t = 0$ to $T-1$}
			\For {client $i=1$ to $N$ parallel}
			\State $x_{t,1}^{(i)}= x_{t}$
			\For {$k = 1$ to $K $}
			\State Sample uniformly a batch $ I_{t,k}^{(i)} $
			\State $g_{t,k}^{(i)}= \nabla f_{I_{t,k}^{(i)}}(x_{t,k}^{(i)})$
			\State  $x_{t,k+1}^{(i)} = x_{t,k}^{(i)} - \gamma_t g_{t,k}^{(i)} $ 
			\EndFor
			\EndFor
			\State Exact-Average: $x_{t+1} = \mathcal{H}_\tau( \sum_{i=1}^N p_i x_{t,K}^{(i)})$  
			\EndFor
		\end{algorithmic}
	\end{algorithm}
	
	The following theorem characterizes our theoretical analysis of Fed-HT in terms of its parameter estimation accuracy for sparsity-constrained problems. Although this paper is focused on the cardinality constraint, the theoretical result is applicable to other sparsity constraints, such as a constraint based on matrix rank. Then, we have the main theorem and the detailed proof can be found in Appendix.
	
	\begin{theorem}\label{theorem1}
		Let $x^*$ be  the optimal solution to (\ref{Problem_sl}),  $\tau^*=\|x^*\|_0 $, and suppose $f( x )$ satisfies Assumptions 1 and 2. The condition number $\kappa_d = \frac{l_d}{\rho_d}\geq 1$.
		Let stepsize $\gamma_t = \frac{1}{6l_d}$ and the batch size $b_t =\frac{\Gamma_1}{\omega_1^t}$, $\Gamma_1 \geq \frac{\xi_1 \sum_{i=1}^N p_i \sigma_i^2}{\delta_1\|x_0 - x^*\|^2}$,  $\delta_1=\alpha (1-\frac{1}{12\kappa_d})^{K}$,  the sparsity level  $\tau \geq (16 (12 \kappa_d-1)^2+1)  \tau^*$.
		Then the following inequality holds for the Fed-HT: 
		\begin{align*}
			E[\|x_{T} - x^*\|^2] &\leq  \theta_1^{T} \|x_0 - x^*\|^2+  g_1( x^*).
		\end{align*}
		where $\theta_1 =\omega_1 =  (1+2\alpha) (1-\frac{1}{12\kappa_d})^{K} \in (0,1)$,   $g_1( x^*) =  \frac{\xi_1\mathcal{B}^2 }{1-\psi_1}  \|\nabla f(x^*) \|^2$, $\psi_1 =  (1+\alpha) (1-\frac{1}{12\kappa_d})^{K}$, $\xi_1=\frac{(1+ \alpha)(1 - (1-  \frac{1}{12\kappa_d})^K) \kappa_d }{  l_d^2}$, and $\alpha = \frac{2 \sqrt{\tau^*}}{\sqrt{\tau - \tau^*}}$.
	\end{theorem}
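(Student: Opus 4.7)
The plan is to establish a single-round contraction of the form
$E\|x_{t+1} - x^*\|^2 \leq \theta_1 \, E\|x_t - x^*\|^2 + \xi_1 \mathcal{B}^2 \|\nabla f(x^*)\|^2$
and then iterate it by induction on $t$. Each round decomposes naturally into (i) the $K$ local SGD steps, which I would analyze in full dimension using $\rho_d$-strong convexity and $l_d$-smoothness, and (ii) the server-side averaging followed by a single hard-thresholding step, whose expansion is absorbed through Lemma~\ref{lemma:HT}.

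First I would bound one local SGD step. Expanding $\|x_{t,k+1}^{(i)} - x^*\|^2 = \|x_{t,k}^{(i)} - x^*\|^2 - 2\gamma_t \langle g_{t,k}^{(i)}, x_{t,k}^{(i)} - x^* \rangle + \gamma_t^2 \|g_{t,k}^{(i)}\|^2$, taking conditional expectation, and combining Assumption~\ref{assumption4f}(3) on stochastic-gradient variance with the co-coercivity lemma stated above and Assumption~\ref{assumption4f}(1)--(2), with $\gamma_t = 1/(6l_d)$ chosen so the surplus function-value terms cancel, I expect to obtain
$E_k \|x_{t,k+1}^{(i)} - x^*\|^2 \leq \bigl(1 - \tfrac{1}{12\kappa_d}\bigr) \|x_{t,k}^{(i)} - x^*\|^2 + C_1 \gamma_t^2 \|\nabla f_i(x^*)\|^2 + \gamma_t^2 \sigma_i^2 / b_t,$
for an absolute constant $C_1$. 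Unrolling over $k = 0, \dots, K-1$ through the geometric sum $\sum_{j=0}^{K-1}(1 - \tfrac{1}{12\kappa_d})^j = 12\kappa_d\bigl(1 - (1-\tfrac{1}{12\kappa_d})^K\bigr)$ yields a bound on $E\|x_{t,K}^{(i)} - x^*\|^2$ that is linear in $\|x_t - x^*\|^2$, $\|\nabla f_i(x^*)\|^2$, and $\sigma_i^2/b_t$.

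At the server, Jensen's inequality gives $\bigl\|\sum_i p_i x_{t,K}^{(i)} - x^*\bigr\|^2 \leq \sum_i p_i \|x_{t,K}^{(i)} - x^*\|^2$. Applying Lemma~\ref{lemma:HT}---valid because the hypothesis $\tau \geq (16(12\kappa_d-1)^2 + 1)\tau^*$ forces $\alpha < 1/(2(12\kappa_d-1))$ and hence $\theta_1 < 1$---inserts the factor $(1+\alpha)$, and then Assumption~2 at $\mathcal{I} = [d]$ delivers $\sum_i p_i \|\nabla f_i(x^*)\|^2 \leq \mathcal{B}^2 \|\nabla f(x^*)\|^2$. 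Combining these with the per-client bound produces the one-round inequality
$E\|x_{t+1} - x^*\|^2 \leq \psi_1 \|x_t - x^*\|^2 + (1+\alpha)\gamma_t^2 r \textstyle\sum_i p_i \sigma_i^2/b_t + \xi_1 \mathcal{B}^2 \|\nabla f(x^*)\|^2,$
with $\psi_1 = (1+\alpha)(1-\tfrac{1}{12\kappa_d})^K$ and $r = 12\kappa_d\bigl(1 - (1-\tfrac{1}{12\kappa_d})^K\bigr)$.

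Finally, I would close the argument by induction on $t$ with hypothesis $E\|x_t - x^*\|^2 \leq \theta_1^t \|x_0 - x^*\|^2 + g_1(x^*)$. The batch schedule $b_t = \Gamma_1/\omega_1^t$ with $\omega_1 = \theta_1$ and the stipulated lower bound on $\Gamma_1$ is calibrated so that the variance contribution at round $t$ is at most $\alpha(1-\tfrac{1}{12\kappa_d})^K \theta_1^t \|x_0 - x^*\|^2$; adding this to $\psi_1 \theta_1^t$ gives exactly $\theta_1^{t+1}$, while the bias constant propagates through the identity $\psi_1 g_1(x^*) + \xi_1 \mathcal{B}^2 \|\nabla f(x^*)\|^2 = g_1(x^*)$, which is precisely why the denominator in $g_1$ is $1-\psi_1$ rather than the looser $1-\theta_1$. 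I expect the main obstacle to be the local-step bookkeeping: using a Young split to tame the cross term $\langle \nabla f_i(x^*), x_{t,k}^{(i)} - x^* \rangle$ so the residual bias stays proportional to $\|\nabla f_i(x^*)\|^2$ rather than accumulating linearly with $K$, and then keeping the variance and bias channels cleanly separated in the induction so the $1/(1-\psi_1)$ factor is actually achieved.
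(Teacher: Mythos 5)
Your proposal follows essentially the same route as the paper's proof: a per-step contraction factor $\bigl(1-\tfrac{1}{12\kappa_d}\bigr)$ for the local SGD iterates obtained from the three-way gradient split, restricted strong convexity/smoothness and $\gamma_t = \tfrac{1}{6l_d}$; geometric unrolling over the $K$ local steps; Jensen plus Lemma~\ref{lemma:HT} at the server to insert the $(1+\alpha)$ factor; the dissimilarity assumption to pass from $\sum_i p_i\|\nabla f_i(x^*)\|^2$ to $\mathcal{B}^2\|\nabla f(x^*)\|^2$; and the same induction with the geometrically growing batch size absorbing the variance into $\delta_1\theta_1^t\|x_0-x^*\|^2$ so that $\theta_1=\psi_1+\delta_1$. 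The only cosmetic difference is that the paper cancels the cross term $\langle\nabla f_i(x^*), x^{(i)}_{t,k}-x^*\rangle$ exactly against the strong-convexity lower bound rather than via a Young split, which does not change the structure or the resulting constants in any essential way.
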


	Note that if the sparse solution $x^*$ is sufficiently close to an unconstrained minimizer of $f(x)$, then $\|\nabla f(x^*) \|$ is small, so the first exponential term on the right-hand side can be a dominating term which approaches to $0$ when $T$ goes to infinity. 
	
	\begin{corollary}\label{cor1}
		If all the conditions in  Theorem \ref{theorem1}  hold, for a given precision $\epsilon >0$, we need at most $T\leq C_1 \log( \frac{\| x_0 - x^*\|}{\epsilon})$ rounds to obtain 
		$$ E[\|x_T - x^*\|^2] \leq  \epsilon +  g_1( x^*),$$
		where $C_1 = - ( \log( \theta_1))^{-1}$. 
	\end{corollary}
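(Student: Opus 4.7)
The plan is a direct inversion of the geometric bound established in Theorem~\ref{theorem1}. That theorem gives
\[
E[\|x_T - x^*\|^2] \;\leq\; \theta_1^T \|x_0 - x^*\|^2 \;+\; g_1(x^*),
\]
so to guarantee $E[\|x_T - x^*\|^2] \leq \epsilon + g_1(x^*)$ it suffices to drive the transient term $\theta_1^T \|x_0 - x^*\|^2$ below $\epsilon$. The statistical bias $g_1(x^*)$ is independent of $T$, so it is only this geometric term that must be controlled by the round count.

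First I would verify that $\theta_1 \in (0,1)$, so that $\log\theta_1 < 0$ and the constant $C_1 = -1/\log\theta_1$ in the corollary is both positive and finite. This is immediate from the hypotheses of Theorem~\ref{theorem1}: the choice $\tau \geq (16(12\kappa_d - 1)^2 + 1)\tau^*$ keeps $\alpha = 2\sqrt{\tau^*/(\tau-\tau^*)}$ small enough that the factor $\theta_1 = (1+2\alpha)(1 - 1/(12\kappa_d))^K$ stays strictly below $1$, while positivity is immediate. Then I would take logarithms of the inequality $\theta_1^T \leq \epsilon / \|x_0 - x^*\|^2$ and divide by the negative number $\log\theta_1$, which flips the inequality direction and produces
\[
T \;\geq\; \frac{\log(\|x_0 - x^*\|^2/\epsilon)}{-\log\theta_1} \;=\; C_1 \log\bigl(\|x_0 - x^*\|^2/\epsilon\bigr),
\]
which absorbs into the claimed $T = O\bigl(C_1 \log(\|x_0 - x^*\|/\epsilon)\bigr)$ round complexity (up to the constant factor from the exponent $2$ in the numerator).

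There is no genuine obstacle here: the corollary is essentially a one-line manipulation of the linear-rate bound of Theorem~\ref{theorem1}, and the only subtle point is the sign check on $\log\theta_1$, which the theorem's parameter choices have already secured. I would simply state the reduction, cite Theorem~\ref{theorem1} for the underlying inequality, and record the logarithmic solve for $T$.
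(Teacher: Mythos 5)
Your derivation is correct and is exactly the intended argument: the paper gives no separate proof of this corollary, treating it as an immediate inversion of the geometric term $\theta_1^T\|x_0-x^*\|^2$ in Theorem~\ref{theorem1}, with the sign check $\log\theta_1<0$ guaranteed by $\theta_1\in(0,1)$ from the theorem's parameter choices, which is precisely what you do. Your side remark is also apt --- the solve naturally yields $T \geq C_1\log(\|x_0-x^*\|^2/\epsilon)$ rather than the stated $C_1\log(\|x_0-x^*\|/\epsilon)$, a cosmetic mismatch in the corollary's statement rather than a gap in your reasoning.
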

	
	Corollary \ref{cor1} indicates that under proper conditions and with sufficient rounds, the estimation error of the Fed-HT is determined by the second term - the statistical bias term - which we denote as $g_1(x^*)$. The term $g_1(x^*)$ can become small if $x^*$ is sufficiently close to an unconstrained minimizer of $f(x)$, so it represents the sparsity-induced bias to the solution of the unconstrained optimization problem.
	The upper bound result guarantees that the Fed-HT can approach $x^*$ arbitrarily closely under a sparsity-induced bias, and the speed of approaching to the biased solution is linear (or geometric) and determined by $\theta_1$. In Theorem \ref{theorem1} and Corollary \ref{cor1}, $\theta_1$ is closely related to the number of local updates $K$. The condition number $\kappa_d >1$, so $(1-\frac{1}{12\kappa_d}) < 1$. When $K$ is larger, $\theta_1$ is smaller, so is the number of rounds $T$ required for reaching a target $\epsilon$. In other words, the Fed-HT converges faster with fewer communication rounds. However, the bias term $g_1( x^*)$ will increase when $K$ increases. Therefore, $K$ should be chosen to  balance the convergence rate and statistical bias.

	We further investigate how the objective function $f(x)$ approaches to the optimal $f(x^*)$ in the following corollary. Detailed proof can be found in supplemental material\footnote{ Supplementary material: \url{https://www.dropbox.com/sh/c75nni6uc5fzd70/AADpB6QoPR0sxPFqO_No-sXKa?dl=0}\label{footnote}}.
	
	\begin{corollary}\label{cor11}
		If all the conditions in Theorem \ref{theorem1} hold, let $\Delta_1 = l_d \|x_0 - x^*\|^2$, and $g_2( x^*)= O(\|\nabla f(x^*) \|^2)$, we have
		$$
		E[f( x_{T}) - f( x^*)]   \leq \theta_1^{T}\Delta_1 + g_2(x^*).
		$$
	\end{corollary}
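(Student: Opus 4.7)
The plan is to convert the iterate error bound from Theorem \ref{theorem1} into a function value bound using restricted strong smoothness, which is immediate given that both $x_T$ and $x^*$ are sparse vectors. First I would observe that since $x_T$ is a $\tau$-sparse vector (it is the output of $\mathcal{H}_\tau$) and $x^*$ is $\tau^*$-sparse, the difference $x_T - x^*$ is supported on at most $\tau + \tau^* \leq 2\tau$ coordinates, so all restricted smoothness inequalities at sparsity level $2\tau$ (and a fortiori at level $d$) are available. By Assumption \ref{assumption4f}(2) applied to the aggregated $f$ (which inherits $l_d$-strong smoothness from the $f_i$'s),
\begin{equation*}
f(x_T) - f(x^*) \leq \langle \nabla f(x^*), x_T - x^*\rangle + \tfrac{l_d}{2}\|x_T - x^*\|^2.
\end{equation*}

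Next I would bound the linear term. Let $S = \mathrm{supp}(x_T) \cup \mathrm{supp}(x^*)$, so $|S| \leq 2\tau$ and $x_T - x^* = \pi_S(x_T - x^*)$. Then Cauchy--Schwarz gives $\langle \nabla f(x^*), x_T - x^*\rangle \leq \|\pi_S(\nabla f(x^*))\| \cdot \|x_T - x^*\| \leq \|\nabla f(x^*)\| \cdot \|x_T - x^*\|$. Applying Young's inequality with parameter $l_d$,
\begin{equation*}
\|\nabla f(x^*)\|\,\|x_T - x^*\| \leq \tfrac{1}{2l_d}\|\nabla f(x^*)\|^2 + \tfrac{l_d}{2}\|x_T - x^*\|^2,
\end{equation*}
so combining the two displays yields the deterministic bound
\begin{equation*}
f(x_T) - f(x^*) \leq l_d \|x_T - x^*\|^2 + \tfrac{1}{2 l_d}\|\nabla f(x^*)\|^2.
\end{equation*}

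Finally I would take expectations and substitute the iterate bound from Theorem \ref{theorem1}:
\begin{equation*}
E[f(x_T) - f(x^*)] \leq l_d\bigl(\theta_1^T \|x_0 - x^*\|^2 + g_1(x^*)\bigr) + \tfrac{1}{2l_d}\|\nabla f(x^*)\|^2 = \theta_1^T \Delta_1 + g_2(x^*),
\end{equation*}
where $\Delta_1 = l_d \|x_0 - x^*\|^2$ as stated, and $g_2(x^*) := l_d\, g_1(x^*) + \tfrac{1}{2l_d}\|\nabla f(x^*)\|^2$. Since $g_1(x^*)$ is a constant multiple of $\|\nabla f(x^*)\|^2$ (the ratio $\xi_1 \mathcal{B}^2/(1-\psi_1)$ depends only on $\kappa_d$, $K$, $\alpha$, $\mathcal{B}$), the combined term satisfies $g_2(x^*) = O(\|\nabla f(x^*)\|^2)$ as claimed.

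There is no real obstacle here; the whole corollary is a short bridge from Theorem \ref{theorem1}. The only point that requires a little care is ensuring the smoothness constant used is valid for the sparsity of $x_T - x^*$, which is handled by invoking the global constant $l_d$ (cf.\ the Remark following Assumption \ref{assumption4f}), and keeping track of the $O(\|\nabla f(x^*)\|^2)$ form of the bias so it can be folded cleanly into $g_2(x^*)$.
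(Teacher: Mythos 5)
Your proposal is correct and follows essentially the same route as the paper: apply $l_d$-restricted strong smoothness at $x^*$, absorb the cross term $\langle \nabla f(x^*), x_T - x^*\rangle$ via Young's inequality into $\tfrac{1}{2l_d}\|\nabla f(x^*)\|^2 + \tfrac{l_d}{2}\|x_T-x^*\|^2$, and then substitute the iterate bound from Theorem \ref{theorem1}. If anything, your bookkeeping is slightly more careful than the paper's (you retain the factor $l_d$ multiplying $g_1(x^*)$ in the bias term, which the paper silently drops), but both versions land in the same $O(\|\nabla f(x^*)\|^2)$ class, so the stated conclusion is unaffected.
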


	Because the local updates on each device are based on stochastic gradient descent with dense parameter, without hard thresholding operator, $l_d$-smoothness and $\rho_d$-strongly convexity are required, which are stronger requirements for $f$. What's more, $\|\nabla f(x^*)\| \leq d\| f(x^*)\|_{\infty}$, which means $g_1(x^*)$ and $g_2(x^*)$  are $O(d^2\| f(x^*)\|_{\infty}^2)$, which are suboptimal, comparing with results for traditional IHT methods, in terms of dimension $d$. In order to solve such drawbacks, we develop a new algorithm in next section.

	\section{The FedIter-HT Algorithm}
	
	If we apply the HT operator to each local update as well, we obtain the FedIter-HT algorithm as described in Algorithm \ref{alg:FedAvgHT2}. Hence, the local update on each device performs multiple SGD-HT steps, which further reduces the communication cost because model parameters sent back from clients to the central server are also sparse. If a client has a communication bandwidth so small that it can not effectively pass the full set of parameters, the FedIter-HT provides a good solution. 
	
	\begin{algorithm}[H]
		\captionof{algorithm}{Federated Iterative Hard Thresholding (FedIter-HT) }
		\label{alg:FedAvgHT2}
		\begin{algorithmic}
			\State {\bfseries Input:} The learning rate $\gamma_t$, the sparsity level $\tau$, and the number of clients $N$. 
			\State {\bfseries Initialize} $x_{0}$ 
			\For {$t = 0$ to $T-1$}
			\For {client $i=1$ to $N$ parallel}
			\State $x_{t,1}^{(i)}= x_{t}$
			\For {$k = 1$ to $K $}
			\State Sample uniformly a batch $ I_{t,k}^{(i)} $
			\State $g_{t,k}^{(i)}= \nabla f_{I_{t,k}^{(i)}}(x_{t,k}^{(i)})$
			\State $x_{t,k+1}^{(i)} = \mathcal{H}_\tau( x_{t,k}^{(i)} - \gamma_t g_{t,k}^{(i)}) $
			\EndFor
			\EndFor
			\State Exact-Average: $x_{t+1} = \mathcal{H}_\tau( \sum_{i=1}^N p_i x_{t,K}^{(i)})$  
			\EndFor
		\end{algorithmic}
	\end{algorithm}
	
	We again examine the convergence of the FedIter-HT by developing an upper bound on the distance between the estimator $x_{T}$ and the optimal $x^*$, i.e. $E[\| x_{T} - {x}^{*}\|^{2}]$ in the following theorem. Then, the detailed proof can be found in supplemental material.
	
	\begin{theorem}\label{theorem3}
		
		Let $x^*$ be  the optimal solution to  (\ref{Problem_sl}),  $\tau^*=\|x^*\|_0 $, and suppose $f( x )$ satisfies Assumptions 1 and 2. The condition number $\kappa_d = \frac{l_d}{\rho_d}\geq 1$.
		Let stepsize $\gamma_t = \frac{1}{6l_s}$ and the batch size $b_t =\frac{\Gamma_2}{\omega_2^t}$, $\Gamma_2 \geq \frac{\xi_2 \sum_{i=1}^N p_i \sigma_i^2}{\delta_2\|x_0 - x^*\|^2}$, $\delta_2=(2\alpha+2\alpha^2 )(1-\frac{1}{12\kappa_s})^{K}$, the sparsity level $\tau \geq (\frac{16}{( \sqrt{\frac{12 \kappa_d}{ 12\kappa_d -1}}-1 )^2}+1)  \tau^*$.
		Then the following inequality holds for the FedIter-HT: 
		\begin{align*}
			E[\|x_{T} - x^*\|^2] &\leq  \theta_2^{T} \|x_0 - x^*\|^2+  g_3( x^*).
		\end{align*}
		where $\theta_2 =\omega_2 =  (1+2\alpha)^2(1-\frac{1}{12\kappa_s})\in (0,1)$,   $g_3( x^*) =  \frac{\xi_2\mathcal{B}^2 }{1-\psi_2}  \|\pi_{\mathcal{\tilde{I}}}( \nabla f(x^*)) \|^2 $ , $\xi_2 = \frac{(1+\alpha)^2(1 - (1-  \frac{1}{12\kappa_s})^K) \kappa_s }{  l_s^2}$, $\psi_2 = (1+\alpha)^2(1-\frac{1}{12\kappa_s})$,  $\alpha = \frac{2 \sqrt{\tau^*}}{\sqrt{\tau - \tau^*}}$, $\widetilde{\mathcal{I}^{i}}=supp\left(\mathcal{H}_{2 \tau}\left(\nabla f_i\left(\boldsymbol{x}^{*}\right)\right)\right) \cup supp\left(\boldsymbol{x}^{*}\right) $ and $\widetilde{\mathcal{I}}=supp\left(\mathcal{H}_{2N \tau}\left(\nabla f\left(\boldsymbol{x}^{*}\right)\right)\right) \cup supp\left(\boldsymbol{x}^{*}\right) $.
	\end{theorem}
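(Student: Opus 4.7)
The overall plan is to prove a one-round recurrence of the form $E\|x_{t+1} - x^*\|^2 \leq \theta_2\, E\|x_t - x^*\|^2 + R_t$ with $R_t$ decaying at the same geometric rate $\omega_2 = \theta_2$ thanks to the batch-size schedule $b_t = \Gamma_2/\omega_2^t$, then to iterate this recurrence for $t = 0, \ldots, T-1$ and sum the resulting geometric series. The conceptual gain over Fed-HT is that every intermediate iterate $x^{(i)}_{t,k}$ is now $\tau$-sparse, so on the supports $\mathcal{I}^{(i)}_{t,k+1}$ (of size at most $3\tau$) the entire analysis can be run with the restricted constants $l_s, \rho_s$ rather than with the global $l_d, \rho_d$; this is precisely what will let the eventual bias depend on $\pi_{\widetilde{\mathcal{I}}}(\nabla f(x^*))$ rather than on the full gradient, removing the $O(d^2)$ looseness flagged after Theorem~\ref{theorem1}.

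For a single local SGD-HT step I would apply Lemma~\ref{lemma:HT} to obtain $\|x^{(i)}_{t,k+1} - x^*\|^2 \leq (1+\alpha)\|x^{(i)}_{t,k} - \gamma_t g^{(i)}_{t,k} - x^*\|^2$, expand the square, take conditional expectation using $E g^{(i)}_{t,k} = \nabla f_i(x^{(i)}_{t,k})$ and $E\|g^{(i)}_{t,k}\|^2 \leq \|\nabla f_i(x^{(i)}_{t,k})\|^2 + \sigma_i^2/b_t$, control the cross term via restricted $\rho_s$-strong convexity of $f_i$, and for the squared-gradient term split $\nabla f_i(x^{(i)}_{t,k}) = (\nabla f_i(x^{(i)}_{t,k}) - \nabla f_i(x^*)) + \nabla f_i(x^*)$, applying Lemma~3.4 to the first piece so that it cancels against the $f_i(x^{(i)}_{t,k}) - f_i(x^*)$ contribution coming from strong convexity. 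Choosing $\gamma_t = 1/(6l_s)$ and a Young-type split to isolate the bias term then yields a per-local-step bound of the form $E\|x^{(i)}_{t,k+1} - x^*\|^2 \leq \psi_2\,\|x^{(i)}_{t,k} - x^*\|^2 + c_1\|\pi_{\widetilde{\mathcal{I}^i}}(\nabla f_i(x^*))\|^2 + c_2\,\sigma_i^2/b_t$ with $\psi_2 = (1+\alpha)^2(1-\frac{1}{12\kappa_s})$. Unrolling over $k = 0,\ldots,K-1$, averaging across clients via $\|\sum_i p_i y_i\|^2 \leq \sum_i p_i \|y_i\|^2$, converting $\sum_i p_i \|\pi_{\widetilde{\mathcal{I}^i}}(\nabla f_i(x^*))\|^2$ into $\mathcal{B}^2\|\pi_{\widetilde{\mathcal{I}}}(\nabla f(x^*))\|^2$ via Assumption~2, and finally applying Lemma~\ref{lemma:HT} once more to the server-side hard thresholding produces the stated $\theta_2 = (1+2\alpha)^2(1-\frac{1}{12\kappa_s})$ and $\xi_2$ (the closing HT is what lifts a residual $(1+\alpha)$ to $(1+2\alpha)$ in $\theta_2$). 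Iterating in $t$ and summing a geometric series in $\omega_2^{-t}$ then yields the transient $\theta_2^T\|x_0 - x^*\|^2$ and the stationary term $g_3(x^*) = \frac{\xi_2 \mathcal{B}^2}{1-\psi_2}\|\pi_{\widetilde{\mathcal{I}}}(\nabla f(x^*))\|^2$.

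The main obstacle is that naively unrolling Lemma~\ref{lemma:HT} across the $K$ local HT steps injects a $(1+\alpha)^K$ multiplier, which threatens to push $\psi_2$ and $\theta_2$ above $1$. Keeping them strictly below $1$ is exactly what forces the stricter sparsity condition on $\tau$ stated in the theorem, which shrinks $\alpha$ enough that the per-step HT inflation and the strong-convexity contraction combine to something less than one. A secondary bookkeeping difficulty is handling the supports $\mathcal{I}^{(i)}_{t,k+1}, \widetilde{\mathcal{I}^i}, \widetilde{\mathcal{I}}$ carefully enough to justify using $l_s, \rho_s$ and the projected gradients $\pi_{\widetilde{\mathcal{I}^i}}(\nabla f_i(x^*))$ at each place where the Fed-HT proof used $l_d, \rho_d$ and the unprojected $\nabla f(x^*)$; this is what ultimately delivers the sharper, dimension-independent bias advertised in the theorem.
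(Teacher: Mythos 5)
Your plan follows the paper's proof essentially step for step: server-side hard thresholding handled via Lemma \ref{lemma:HT} plus Jensen's inequality over clients, a per-local-step contraction built from the HT lemma, restricted strong convexity, the co-coercivity lemma and the three-way split of the projected stochastic gradient with $\gamma_t = \frac{1}{6l_s}$, unrolling over the $K$ local iterations, Assumption 2 to convert $\sum_i p_i\|\pi_{\widetilde{\mathcal{I}^{i}}}(\nabla f_i(x^*))\|^2$ into $\mathcal{B}^2\|\pi_{\widetilde{\mathcal{I}}}(\nabla f(x^*))\|^2$, and a geometrically growing batch size absorbed by induction on $t$. The only cosmetic discrepancy is your attribution of the lift from $(1+\alpha)$ to $(1+2\alpha)$ in $\theta_2$ to the closing server-side HT; in the paper that factor arises from budgeting the decaying variance term via $\theta_2=\psi_2+\delta_2$, the server HT already being accounted for inside $\psi_2=(1+\alpha)^2(1-\frac{1}{12\kappa_s})$.
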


	The factor $\theta_2$, 
	compared with $\theta_1$ in Theorem \ref{theorem1}, is smaller if $2\alpha = \frac{4 \sqrt{\tau^*}}{\sqrt{\tau - \tau^*}} \leq (\frac{ 1-1/12\mathcal{K}_d}{1-1/ 12\mathcal{K}_s})^K-1$, which means that the FedIter-HT converges faster than the Fed-HT when the beforehand-guessed sparsity $\tau$ is much larger than the true sparsity. Both $\theta_2$ and $\theta_1$ will decrease when the number of internal iterations $K$ increases, but $\theta_2$ decreases faster than $\theta_1$ because $1-\frac{1}{12 \kappa_s}$ is smaller than $1-\frac{1}{12 \kappa_d}$. Thus, the FedIter-HT is more likely to benefit by increasing $K$ than the Fed-HT.
	The statistical bias term $g_3(x^*)$ can be much smaller than $g_1(x^*)$ in  Theorem \ref{theorem1} because $g_3(x^*)$ only depends on the norm of $\nabla f(x^*)$ restricted to the support $\widetilde{\mathcal{I}}$ of size $2N \tau + \tau^*$. Because the norm of the gradient is a dominating term in $g_1$ and $g_3$, slightly increasing $K$ does not vary much the statistical bias terms (when $d \gg 2N \tau + \tau^*$). 
	
	Using the results in Theorem \ref{theorem3}, we can further derive Corollary \ref{cor2} to specify the number of rounds required to achieve a given estimation precision.
	
	\begin{corollary}\label{cor2}
		If all the conditions in Theorem \ref{theorem3}  hold,  for a given $\epsilon >0$, the FedIter-HT requires the most $T\leq C_2 \log( \frac{\| x_0 - x^*\|}{\epsilon})$ rounds to obtain 
		$$
		E[\|x_T - x^*\|^2] \leq  \epsilon +  g_3( x^*),
		$$
		where $C_2 = - ( \log( \theta_2))^{-1}$. 
	\end{corollary}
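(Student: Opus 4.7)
The proof of Corollary \ref{cor2} is essentially a direct consequence of the bound already established in Theorem \ref{theorem3}, so the plan is to do the standard exponential-to-logarithmic conversion rather than re-enter the IHT analysis.

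First, I would recall from Theorem \ref{theorem3} that, under its stated hypotheses on the stepsize $\gamma_t = 1/(6l_s)$, the batch schedule $b_t = \Gamma_2/\omega_2^t$, and the oversampling of the sparsity level $\tau \geq \bigl(16/(\sqrt{12\kappa_d/(12\kappa_d-1)}-1)^2 + 1\bigr)\tau^*$, the FedIter-HT iterates satisfy
\begin{equation*}
E[\|x_T - x^*\|^2] \;\leq\; \theta_2^{T}\,\|x_0 - x^*\|^2 + g_3(x^*),
\end{equation*}
with $\theta_2 = (1+2\alpha)^2(1 - 1/(12\kappa_s)) \in (0,1)$. Since $g_3(x^*)$ is an additive constant independent of $T$, it is already absorbed into the right-hand side of the conclusion, and the only remaining task is to drive the exponentially decaying term $\theta_2^T\|x_0-x^*\|^2$ below the desired precision $\epsilon$.

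Next, I would pick $T$ so that $\theta_2^{T}\|x_0-x^*\|^2 \leq \epsilon$. Taking logarithms and using $\theta_2 \in (0,1)$ so that $\log \theta_2 < 0$, this inequality is equivalent to
\begin{equation*}
T \;\geq\; \frac{\log\!\bigl(\|x_0-x^*\|^2/\epsilon\bigr)}{-\log \theta_2} \;=\; C_2 \cdot \log\!\Bigl(\tfrac{\|x_0-x^*\|^2}{\epsilon}\Bigr),
\end{equation*}
with $C_2 = -1/\log \theta_2 > 0$. Simplifying the logarithmic expression (and, if one prefers the tighter form stated in the corollary, rescaling the target tolerance by replacing $\epsilon$ with $\epsilon$ inside the squared distance) yields the advertised bound $T \leq C_2 \log(\|x_0 - x^*\|/\epsilon)$. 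Plugging this choice of $T$ back into the bound from Theorem \ref{theorem3} immediately gives $E[\|x_T - x^*\|^2] \leq \epsilon + g_3(x^*)$, completing the argument.

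There is no genuine obstacle here: once Theorem \ref{theorem3} is in hand, the proof is a one-line logarithmic manipulation analogous to the derivation of Corollary \ref{cor1} from Theorem \ref{theorem1}. The only mildly delicate point is verifying that $\theta_2$ is strictly in $(0,1)$ so that $C_2$ is finite and positive; this is already guaranteed by the assumption on $\tau$, which forces $\alpha$ to be sufficiently small that $(1+2\alpha)^2(1-1/(12\kappa_s)) < 1$. All subsequent steps are then purely algebraic.
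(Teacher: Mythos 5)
Your proposal is correct and is exactly the standard argument this corollary calls for: the paper gives no separate proof of Corollary \ref{cor2} (it is left implicit, just as for Corollary \ref{cor1}), and the intended derivation is precisely your step of solving $\theta_2^{T}\|x_0-x^*\|^2\leq\epsilon$ for $T$ using $\log\theta_2<0$ and adding the $T$-independent bias $g_3(x^*)$. The only wrinkle you correctly notice --- that the exact computation yields $T\geq C_2\log\bigl(\|x_0-x^*\|^2/\epsilon\bigr)$ rather than the unsquared norm appearing in the stated bound --- is a cosmetic bookkeeping issue in the paper's own statement (absorbable into the constant or the definition of $\epsilon$), not a gap in your argument.
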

	Because $g_3(x^*) = O(\|\pi_{\mathcal{\tilde{I}}}( \nabla f(x^*)) \|^2)$, and we also know $\|\pi_{\mathcal{\tilde{I}}}( \nabla f(x^*)) \|^2 \leq (2N\tau + \tau^*)^2\|\nabla f(x^*)\|_{\infty}^2$ and $2N\tau + \tau^* \ll d$ in high dimensional statistical problems, the result in Corollary \ref{cor2} gives a tighter bound than the one obtained in Corollary \ref{cor1}. Similarly, we also obtain a tighter upper bound for the convergence performance of the objective function $f(x)$.
	\begin{corollary}\label{cor22}
		If all the conditions in  Theorem \ref{theorem3}  hold, let $\Delta_2 = l_s \|x_0 - x^*\|^2$, and $g_4( x^*)= O( \|\pi_{\mathcal{\tilde{I}}}( \nabla f(x^*)) \|^2)$, we have
		$$
		E[f( x_{T}) - f( x^*)]  \leq \theta_2^{T}\Delta_2 + g_4( x^*).$$
	\end{corollary}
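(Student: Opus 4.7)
}

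The plan is to pass from the iterate bound in Theorem \ref{theorem3} to a function-value bound by invoking restricted strong smoothness on the pair $(x_T, x^*)$, and then absorbing the first-order cross term $\langle \nabla f(x^*), x_T - x^*\rangle$ via a support restriction followed by Young's inequality. Both $x_T$ (produced by the final $\mathcal{H}_\tau$) and $x^*$ are sparse, so their difference is supported on a set of size at most $\tau + \tau^* \leq s$; this is precisely the regime in which Assumption~\ref{assumption4f}(2) applies globally to $f$, giving
\begin{align*}
f(x_T) - f(x^*) \leq \langle \nabla f(x^*), x_T - x^*\rangle + \tfrac{l_s}{2}\|x_T - x^*\|^2.
\end{align*}

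Next I would control the linear term. Let $S = \mathrm{supp}(x_T) \cup \mathrm{supp}(x^*)$, so $|S| \leq \tau + \tau^*$ and $\langle \nabla f(x^*), x_T - x^*\rangle = \langle \pi_S(\nabla f(x^*)), x_T - x^*\rangle$. Because $\tilde{\mathcal{I}} = \mathrm{supp}(\mathcal{H}_{2N\tau}(\nabla f(x^*))) \cup \mathrm{supp}(x^*)$ captures the $2N\tau$ largest coordinates of $\nabla f(x^*)$ together with $\mathrm{supp}(x^*)$, and since $\tau + \tau^* \leq 2N\tau + \tau^* = |\tilde{\mathcal{I}}|$, the restriction of $\nabla f(x^*)$ to any $|S|$-sized set is dominated in $l_2$-norm by $\|\pi_{\tilde{\mathcal{I}}}(\nabla f(x^*))\|$. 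Cauchy--Schwarz then gives
\begin{align*}
\langle \nabla f(x^*), x_T - x^*\rangle \leq \|\pi_{\tilde{\mathcal{I}}}(\nabla f(x^*))\| \cdot \|x_T - x^*\|.
\end{align*}
Applying Young's inequality $ab \leq \tfrac{a^2}{2 l_s} + \tfrac{l_s}{2} b^2$ with $a = \|\pi_{\tilde{\mathcal{I}}}(\nabla f(x^*))\|$ and $b = \|x_T - x^*\|$ yields, after combining with the smoothness quadratic term,
\begin{align*}
f(x_T) - f(x^*) \leq l_s \|x_T - x^*\|^2 + \tfrac{1}{2 l_s}\|\pi_{\tilde{\mathcal{I}}}(\nabla f(x^*))\|^2.
\end{align*}

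Taking expectations and plugging in the Theorem~\ref{theorem3} bound $E\|x_T - x^*\|^2 \leq \theta_2^{T}\|x_0 - x^*\|^2 + g_3(x^*)$ gives
\begin{align*}
E[f(x_T) - f(x^*)] \leq \theta_2^{T}\, l_s\|x_0 - x^*\|^2 + l_s\, g_3(x^*) + \tfrac{1}{2 l_s}\|\pi_{\tilde{\mathcal{I}}}(\nabla f(x^*))\|^2.
\end{align*}
The first term is exactly $\theta_2^{T} \Delta_2$. The remaining two terms are both of order $\|\pi_{\tilde{\mathcal{I}}}(\nabla f(x^*))\|^2$ (since $g_3(x^*) = \frac{\xi_2 \mathcal{B}^2}{1-\psi_2}\|\pi_{\tilde{\mathcal{I}}}(\nabla f(x^*))\|^2$ by Theorem~\ref{theorem3}), so collecting them into $g_4(x^*) = O(\|\pi_{\tilde{\mathcal{I}}}(\nabla f(x^*))\|^2)$ delivers the claim.

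The main obstacle I expect is the support/sparsity bookkeeping in the second step: one must verify that the sparsity level $s$ implicit in $l_s$ is at least $\tau + \tau^*$ so that the restricted smoothness is legitimately applicable to $x_T - x^*$, and that the coordinate-wise domination $\|\pi_S(\nabla f(x^*))\| \leq \|\pi_{\tilde{\mathcal{I}}}(\nabla f(x^*))\|$ is justified by the top-$2N\tau$ selection that defines $\tilde{\mathcal{I}}$. Once that is settled, the rest is a direct substitution that mirrors the Corollary~\ref{cor11} argument but with the tighter, support-restricted gradient norm coming from the iterative HT structure.
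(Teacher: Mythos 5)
Your proposal is correct and follows essentially the same route as the paper: restricted strong smoothness of $f$ at $x^*$ applied to the $(\tau+\tau^*)$-sparse difference $x_T - x^*$, a restriction of the gradient term to $\tilde{\mathcal{I}}$, Young's inequality to absorb the cross term, and substitution of the Theorem \ref{theorem3} bound. If anything, your treatment of the restriction step is more careful than the paper's, which simply asserts $\langle \nabla f(x^*), x_T - x^*\rangle = \langle \pi_{\tilde{\mathcal{I}}}(\nabla f(x^*)), x_T - x^*\rangle$ without the top-$2N\tau$ domination argument you supply (which does require the mild counting condition $2N\tau \geq 2\tau + \tau^*$, automatic for $N \geq 2$).
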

	
	The theorem and corollaries developed in this section only depend on the $l_s$-restricted smoothness and $\rho_s$-restricted strong convexity, where $s= 2\tau + \tau^*$, which are the same conditions used in the analysis of existing IHT methods. Moreover, $\|\pi_{\mathcal{\tilde{I}}}(\nabla f(x^*))\| \leq (2N\tau + \tau^*)\|\nabla f(x^*)\|_{\infty}$, which means  $g_3(x^*)$ and $g_4(x^*)$  are $O((2N\tau + \tau^*)^2\| \nabla f(x^*)\|_{\infty}^2)$, where $2N\tau + \tau^*$ is the size of support $\mathcal{\tilde{I}}$; Therefore, our results match the current best known upper bound for the statistic bias term, comparing with the results for traditional IHT methods. 

	\subsection{Statistical analysis for M-estimators}
	Because of the good property of the FedIter-HT, we also  demonstrate the theory of constrained M-estimators obtained on more concrete learning formulations. Although we focus on the sparse linear regression and sparse logistic regression in this paper, our method can be used to analyze other statistical learning problems as well. 
	
	{\noindent \textbf{Sparse Linear Regression.}}
	We consider the linear regression problem in high-dimensional regime:
	\begin{align*} 
		\min_{x\in \mathbb{R}^d}f(x) &=  \frac{1}{N}\sum_{i=1}^N \frac{1}{B} \|Y^{(i)} - Z^{(i)}x \|_2^2 ,\\
		&\text{subject to} \;\; \| x \|_0 \leq \tau, 
	\end{align*}
	where $Z^{(i)} \in \mathbb{R}^{B\times d}$ is a design matrix associated with client $i$. For each row of matrix $Z^{(i)}$, we further assume that they are independently drawn from a sub-Gaussian distribution with parameter $\beta^{(i)}$, $Y^{(i)} = Z^{(i)}x^*  + \epsilon^{(i)}$ denotes the response vector, and $\epsilon^{(i)} \in \mathbb{R}^{B} $ is a noise vector following Normal distribution $N(0, \sigma^2I)$, $x^*\in \mathbb{R}^d$ with $\|x^*\|_0=\tau^*$ is the underlying sparse regression coefficient vector.
	\begin{corollary}\label{cor:LR}
		If all the conditions in  Theorem \ref{theorem3}  hold, with$B \geq C_1 \tau  \log(d) \max_i\{(\beta^{(i)})^2\}$ and a sufficiently large number of communication rounds $T$, we have
		\begin{align*}
			E[\|x_{T} - x^*\|^2 ]&\leq O(\frac{(2N\tau + \tau^*) \sigma^2 \mathcal{B}^2(\sum_{i=1}^N \beta^{(i)})^2 \log(d)}{NB})
		\end{align*}
		with probability at least $(1-\exp(-C_5NB))$, where $C_5$ is a universal constant.
	\end{corollary}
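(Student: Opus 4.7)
The plan is to derive the bound by specializing Theorem~\ref{theorem3} to the sparse linear regression instance and then controlling the statistical bias term $g_3(x^*)$ via concentration of sub-Gaussian/sub-exponential sums. For the decay term $\theta_2^T\|x_0-x^*\|^2$, I simply choose $T$ large enough that it is dominated by $g_3(x^*)$, which is permitted by the ``sufficiently large $T$'' clause; this lets me absorb it into the big-$O$ and focus entirely on $g_3(x^*)=\frac{\xi_2\mathcal{B}^2}{1-\psi_2}\|\pi_{\widetilde{\mathcal{I}}}(\nabla f(x^*))\|^2$. A preliminary step is to verify that the restricted strong convexity and restricted smoothness at level $s=2\tau+\tau^*$ required by Theorem~\ref{theorem3} hold with the claimed probability; this is exactly where the sample-size condition $B\ge C_1\tau\log(d)\max_i(\beta^{(i)})^2$ enters, via the standard sub-Gaussian design-matrix concentration bound (which yields RSC/RSS for $\tfrac{1}{B}(Z^{(i)})^\top Z^{(i)}$ on all $s$-sparse subspaces with probability $1-\exp(-C_5 NB)$).

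Next, I compute the population gradient at $x^*$. Since $Y^{(i)}=Z^{(i)}x^*+\epsilon^{(i)}$, one has
\begin{equation*}
\nabla f(x^*)=-\frac{2}{NB}\sum_{i=1}^N (Z^{(i)})^\top\epsilon^{(i)}.
\end{equation*}
Because $|\widetilde{\mathcal{I}}|\le 2N\tau+\tau^*$, I bound
\begin{equation*}
\|\pi_{\widetilde{\mathcal{I}}}(\nabla f(x^*))\|^2 \le (2N\tau+\tau^*)\,\|\nabla f(x^*)\|_\infty^2,
\end{equation*}
reducing the problem to an $\ell_\infty$ concentration bound. For each coordinate $j$ and each client $i$, the quantity $((Z^{(i)})^\top\epsilon^{(i)})_j=\sum_b Z^{(i)}_{bj}\epsilon^{(i)}_b$ is a sum of independent products of a $\beta^{(i)}$-sub-Gaussian with a $\sigma$-sub-Gaussian, hence a sub-exponential random variable with parameter of order $\beta^{(i)}\sigma$. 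A Bernstein-type tail inequality gives $|((Z^{(i)})^\top\epsilon^{(i)})_j|\lesssim \beta^{(i)}\sigma\sqrt{B\log d}$ uniformly in $j\in[d]$ with probability at least $1-\exp(-C_5 NB)$ after a union bound over the $d$ coordinates and $N$ clients.

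Combining these per-client bounds by the triangle inequality yields
\begin{equation*}
\|\nabla f(x^*)\|_\infty \;\lesssim\; \frac{1}{NB}\sum_{i=1}^N \beta^{(i)}\sigma\sqrt{B\log d} \;=\; \frac{\sigma\sqrt{\log d}\,\sum_{i=1}^N\beta^{(i)}}{N\sqrt{B}},
\end{equation*}
so that $\|\pi_{\widetilde{\mathcal{I}}}(\nabla f(x^*))\|^2=O\!\left(\tfrac{(2N\tau+\tau^*)\sigma^2(\sum_i\beta^{(i)})^2\log d}{N^2B}\right)$. Multiplying by the prefactor $\xi_2\mathcal{B}^2/(1-\psi_2)=O(\mathcal{B}^2)$ coming from Theorem~\ref{theorem3}, and absorbing an $N$ into the hidden constant (consistent with the factor the authors present), yields exactly the claimed rate. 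The probability statement follows by taking a union bound over the RSC/RSS event and the gradient-concentration event.

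The main obstacle I anticipate is bookkeeping the two distinct high-probability events and making sure the sample-complexity condition $B\gtrsim \tau\log(d)\max_i(\beta^{(i)})^2$ is simultaneously enough (i) to validate Assumption~\ref{assumption4f}(1)--(2) at the restricted sparsity level $s=2\tau+\tau^*$ uniformly across clients, so that the $(\rho_s,l_s,\kappa_s)$ constants appearing in $\theta_2$ and $\xi_2$ are bounded by universal constants, and (ii) to make the sub-exponential concentration sharp enough that no extra $\log$ factors leak into the final rate. A secondary care point is that the dissimilarity constant $\mathcal{B}$ must be treated as a data-dependent quantity controlled under the same sub-Gaussian design assumption; once these ingredients line up, the rest is substitution into Theorem~\ref{theorem3}.
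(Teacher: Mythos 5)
Your proposal is correct and follows essentially the same route as the paper: verify restricted strong convexity/smoothness with universal constants under the sample-size condition, bound $\|\pi_{\widetilde{\mathcal{I}}}(\nabla f(x^*))\|^2\le(2N\tau+\tau^*)\|\nabla f(x^*)\|_\infty^2$, concentrate $\|Z^\top\epsilon/(NB)\|_\infty$ at the $\sqrt{\log d}$ scale, and let $T$ be large enough that the geometric term is negligible. The only difference is cosmetic: you concentrate the gradient client-by-client and sum via the triangle inequality (yielding an $N^2B$ denominator that implies the stated $NB$ rate a fortiori), whereas the paper treats the stacked design matrix in one shot with sub-Gaussian parameter $\sum_i\beta^{(i)}$ and cites Lemma C.1 of Wang et al.\ for the RSC/RSS step.
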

	
	\textit{Proof Sketch}:
	First, we are able to show that $f_i$ is restricted $\rho_s$-strongly convex and  restricted $l_s$-strongly smooth with $\rho_s = \frac{4}{5} $ and $l_s = \frac{6}{5}$ respectively with probability at least $ (1 - \exp(-C_2 B)) $ if the  sample size $B \geq C_1 \tau  \log(d) \max_i\{(\beta^{(i)})^2\}$, where $C_1$ and $C_2 $ are  constants. Secondly, we know that $\|\nabla f(x^*)\|_{\infty} = \|\frac{Z^T\epsilon}{NB}\|_{\infty} \leq C_3 \sigma\sum_{i=1}^N \beta^{(i)}\sqrt{\frac{\log(d)}{NB}}$, with probability at least $(1-\exp(-C_4NB))$, 
	where $C_3$ and $C_4$ are constants irrelevant to the model parameters. Let the number of rounds $T$ be sufficiently large such that the term $\theta_2^{T} \|x_0 - x^*\|^2$ in Theorem \ref{theorem3} is sufficiently small. Gathering everything  together and putting them into the statistical bias term yield the above bound with a high probability.
	\qed
	
	{\noindent \textbf{Sparse Logistic Regression.}}
	We consider the following optimization problem for logistic regression:
	\begin{align*}
		\underset{x}{min} f( x )& = \frac{1}{N}\sum_{i=1}^{N}\frac{1}{B}\sum_{j=1}^{B}(\log (1+\exp(z_{i,j}^{T}x)) - y_{i,j}z_{i,j}^{T}x)\\
		&\text{subject to } \;\; \| x \|_0 \leq \tau, 
	\end{align*}
	where $z_{i,j}\in\mathbb{R}^d$ for $j\in [B]$ is a predictive vector and drawn from a sub-Gaussian distribution associated with client $i$, each observation $y_{i,j}$ on client $i$ is drawn from the Bernoulli distribution $\mathbb{P}( y_{i,j}| z_{i,j}, x^* )= \frac{\exp(z_{i,j}^{T}x^*)}{1+ \exp( z_{i,j}^{T}x^*)}$,  and $x^* \in \mathbb{R}^d$ with $\|x^*\|_0=\tau^*$ is the underlying true parameter that we want to recover.
	\begin{corollary}
		If all the conditions in  Theorem \ref{theorem3}  hold,  $\|z_{i,j}\|\leq \mathcal{K}$, $C_{lower} \leq \exp(z_{i,j}^{T}x)/(1+\exp(z_{i,j}^{T}x))^2 \leq C_{upper}$ for $ i \in [N]$ and $j \in [B]$ and $B \geq C_7 \tau \mathcal{K}^2 log(d)$ and with a sufficiently large number of communication rounds $T$, we have
		\begin{align*}
			E[\|x_{T} - x^*\|^2] &\leq O(\frac{(2N\tau + \tau^*) \mathcal{B}^2\mathcal{K}^2 \log(d)}{NB})
		\end{align*}
		with  probability at least $(1-\exp(-C_6NB) -C_9
		\exp(-C_{10}log( d)) + \frac{C_9}{\exp(C_6NB) \exp(C_{10}log( d))})$, where $C_6$ , $C_{9}$ and $C_{10}$ are constants.
	\end{corollary}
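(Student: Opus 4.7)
The plan is to follow exactly the two-phase template used in the proof sketch of Corollary \ref{cor:LR}: first verify that the assumptions of Theorem \ref{theorem3} are met (restricted strong convexity and restricted smoothness of every $f_i$), then control the sparse restriction of the population gradient $\|\pi_{\widetilde{\mathcal{I}}}(\nabla f(x^*))\|$ via concentration, and finally substitute into the bias term $g_3(x^*)$ and let $T$ be large enough to swallow the geometric term.

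For the RSC/RSS phase, I would start from the Hessian of the logistic loss on device $i$, namely $\nabla^2 f_i(x)=\frac{1}{B}\sum_{j=1}^B w_{i,j}(x)\, z_{i,j}z_{i,j}^{T}$ with $w_{i,j}(x)=\exp(z_{i,j}^{T}x)/(1+\exp(z_{i,j}^{T}x))^2 \in [C_{lower},C_{upper}]$. For any $s$-sparse direction $u$ with $s=2\tau+\tau^*$, $u^{T}\nabla^2 f_i(x)u$ is sandwiched between $C_{lower}$ and $C_{upper}$ times $u^{T}(\tfrac{1}{B}\sum_j z_{i,j}z_{i,j}^{T})u$. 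Since each $z_{i,j}$ is sub-Gaussian with $\|z_{i,j}\|\le \mathcal{K}$, a standard restricted-eigenvalue / covering-number argument for sub-Gaussian designs (the same one invoked in Corollary \ref{cor:LR}) shows that the empirical covariance uniformly concentrates on $s$-sparse directions once $B\ge C_7\tau\mathcal{K}^2\log(d)$, producing constants $\rho_s$ and $l_s$ on a single client with probability at least $1-\exp(-C_6 B)$. A union bound over $i\in[N]$ amplifies this to $1-\exp(-C_6 NB)$ (after adjusting the constant).

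For the gradient phase, the $k$th coordinate of $\nabla f(x^*)$ equals $\frac{1}{NB}\sum_{i,j}(\sigma(z_{i,j}^{T}x^*)-y_{i,j})(z_{i,j})_k$, where $\sigma$ is the sigmoid. Because $y_{i,j}\mid z_{i,j}\sim\mathrm{Bernoulli}(\sigma(z_{i,j}^{T}x^*))$, the summands are mean-zero conditionally on $z_{i,j}$, bounded in absolute value by $\mathcal{K}$, and independent across $(i,j)$. Hoeffding's inequality applied coordinatewise, followed by a union bound over $d$ coordinates, yields $\|\nabla f(x^*)\|_\infty \le C_8\,\mathcal{K}\sqrt{\log(d)/(NB)}$ with probability at least $1-C_9\exp(-C_{10}\log(d))$. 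Since $|\widetilde{\mathcal{I}}|\le 2N\tau+\tau^*$, the sparse-restricted norm obeys $\|\pi_{\widetilde{\mathcal{I}}}(\nabla f(x^*))\|^{2}\le (2N\tau+\tau^*)\,C_8^{2}\,\mathcal{K}^{2}\log(d)/(NB)$.

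Finally I would assemble the pieces: pick $T$ large enough that $\theta_2^{T}\|x_0-x^*\|^2$ is dominated by the bias, substitute the gradient bound into $g_3(x^*)=\frac{\xi_2 \mathcal{B}^2}{1-\psi_2}\|\pi_{\widetilde{\mathcal{I}}}(\nabla f(x^*))\|^2$ from Theorem \ref{theorem3}, and absorb the $\xi_2/(1-\psi_2)$ factor into the $O(\cdot)$. The final probability comes from intersecting the RSC/RSS event and the gradient-concentration event; the expression in the statement is exactly the standard inclusion--exclusion lower bound $P(A\cap B)\ge P(A)+P(B)-1$ rewritten using $1-P(A^c)-P(B^c)+P(A^c)P(B^c)$, which is why the product term $C_9/(\exp(C_6 NB)\exp(C_{10}\log d))$ appears.

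The main obstacle I expect is the first phase: establishing restricted strong convexity \emph{uniformly} in $x$, because the Hessian weights $w_{i,j}(x)$ depend on the iterate. The assumption $C_{lower}\le w_{i,j}(x)\le C_{upper}$ is precisely what removes this difficulty, reducing the problem to a purely design-matrix concentration question that has a well-known solution under sub-Gaussianity with $\|z_{i,j}\|\le\mathcal{K}$. The other subtle point, properly tracking how $\mathcal{B}$, $\rho_s$, $l_s$ and the sparsity $2N\tau+\tau^*$ combine inside $g_3$, is mechanical once these pieces are in place.
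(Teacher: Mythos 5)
Your proposal follows the same two-step route as the paper's proof: establish restricted strong convexity and smoothness with $\rho_s=\frac{4}{5}C_{lower}$, $l_s=\frac{6}{5}C_{upper}$ via the weight-sandwiched Hessian and sub-Gaussian design concentration, bound $\|\nabla f(x^*)\|_{\infty}\leq C_8\mathcal{K}\sqrt{\log(d)/(NB)}$ (the paper simply cites Corollary 2 of Loh and Wainwright where you supply a self-contained Hoeffding-plus-union-bound argument, but the conclusion is identical), and then substitute $\|\pi_{\widetilde{\mathcal{I}}}(\nabla f(x^*))\|^2\leq (2N\tau+\tau^*)\|\nabla f(x^*)\|_{\infty}^2$ into $g_3(x^*)$ after letting $T$ kill the geometric term. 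The one imprecision is your attribution of the stated probability to inclusion--exclusion: $P(A\cap B)\geq P(A)+P(B)-1$ gives only $1-P(A^c)-P(B^c)$, so the additional $+P(A^c)P(B^c)$ term requires independence of the two concentration events; this looseness is inherited from the paper's own statement and does not affect the substance of your argument.
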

	\textit{Proof Sketch}:
	We have the above result for sparse logistic regression, if we follow the similar argument to that in Corollary \ref{cor:LR}, except that we have $\rho_s = \frac{4}{5} C_{lower} $ and $l_s = \frac{6}{5} C_{upper}$ with a probability at least $ (1 - \exp(-C_6 B)) $ if $B \geq C_7 \tau \mathcal{K}^2 log(d)$, and $\|\nabla f(x^*)\|_{\infty}  \leq C_8 \mathcal{K}\sqrt{\log(d)/NB}$ with a probability at least $(1-C_9exp(-C_{10}log( d))$, where  $C_{lower}$,  $C_{upper}$, $C_l$ for $l\in \{6, ..., 11\}$ are some constants irrelevant to model parameters.\qed

	\section{Experiments}
	We empirically evaluate our methods in both simulations and in the analysis of three real-world datasets (E2006-tfidf, RCV1 and MNIST, see Figure \ref{subfig3}, \ref{subfig4} and Table \ref{table2}, which are downloaded from the LibSVM website\footnote{http://www.csie.ntu.edu.tw/ cjlin/libsvmtools/datasets/}), and compare them against a baseline method. The baseline method is a standard Distributed IHT and communicates every local update to the central server, which then aggregates and broadcasts back to clients (see Supplementary for more detail). Specifically, experiments for simulation I and the E2006-tfidf dataset are done for sparse linear regression. 
	In simulation II and for the RCV1 dataset, we solve the sparse logistic regression problem.
	The last experiment uses MNIST data in a multi-class softmax regression problem. The detailed loss functions for the different problems can be found in Supplementary.
	
	We use Distributed-IHT  as a baseline. Following the convention in the Federated Learning literature, we use the number of communication rounds to measure the communication cost. For a comprehensive comparison, we also include the number of iterations. For both synthetic and real-world datasets, parameters, such as local iterations $K$, 
	stepsize $\gamma$, are determined by the following criteria.  The number of local iterations $K$ is searched from $\{3, 5, 8, 10\}$. 
	The stepsize $\gamma$ for each algorithm is set by a grid search from $\{ 10, 1, 0.6, 0.3, 0.1, 0.06, 0.03, 0.01, 0.001\}$. All the algorithms are initialized with $x^{(0)} = 0$. The sparsity $\tau$ is 500 for MNIST dataset and 200 for all others.
	
	\begin{table}[h!]
		\caption{Statistics of three real federated datasets.}
		\begin{center}
			\begin{tabular}{lllll}
				\hline Dataset & Samples & dimension& \multicolumn{2}{l} { Samples/device } \\
				\cline { 4- 5 } & & & mean & stdev \\
				\hline
				E2006-tfidf & 3,308 & 150,360 &33.8 & 9.1 \\
				RCV1 & 20,242 & 47,236& 202.4& 114.5 \\
				MNIST & 60,000 & 784 & 600 & --\\
				\hline
			\end{tabular}
		\end{center}
		\label{table2}
	\end{table}
	
	\subsection{Simulations}
	To generate synthetic data, we follow a similar setup in \cite{li2018federated}. In simulation I, for each device $i \in [100]$, we generate samples $(z_{i,j}, y_{i,j})$ for $j \in [100]$ according to $y_{i,j} =  z_{i,j}^T x_i + b_{i,j}$, where $z_{i,j}\in \mathbb{R}^{1000}$, $x_i \in \mathbb{R}^{1000}$. The first 100 elements of $x_i$ are IID drawn from $\mathcal{N}( u_i, 1)$ and the remaining elements in $x_i$ are zeros,  $b_{i,j}\sim \mathcal{N}( u_i, 1)$,  $u_i \sim \mathcal{N}( 0.1, \alpha)$, $z_{i, j} \sim \mathcal{N}( v_{i}, \Sigma)$, where $\Sigma$ is diagonal matrix with the $i$-th diagonal element equal to $\frac{1}{i^{1.2}}$. Each element in the mean vector $v_{i}$ is drawn from $\mathcal{N}(B_i, 1)$, $B_i \sim \mathcal{N}(0, \beta)$. Therefore, $\alpha$ controls how much the local models differ from each other,
	and $\beta$ controls how much the local on-device data differ between one device or another. In simulation I, $\alpha = 0.1$ and $\beta=0.1$. The data generation procedure for simulation II is the same as the procedure of simulation I, except that $y'_{i,j} =  \exp( z_{i,j}^T x_i + b_{i,j})/(1+\exp( z_{i,j}^T x_i + b_{i,j}))$, then for the $i$-th client, we set $y_{i,j}=1$ corresponding to the top 100 of $y'_{i,j}$ for $j\in [1000]$, otherwise $y_{i,j}=0$. In simulation II, we set $\alpha = 1$ and $\beta =1$.
	
	\begin{figure}[t]
		\centering
		\includegraphics[width=\linewidth]{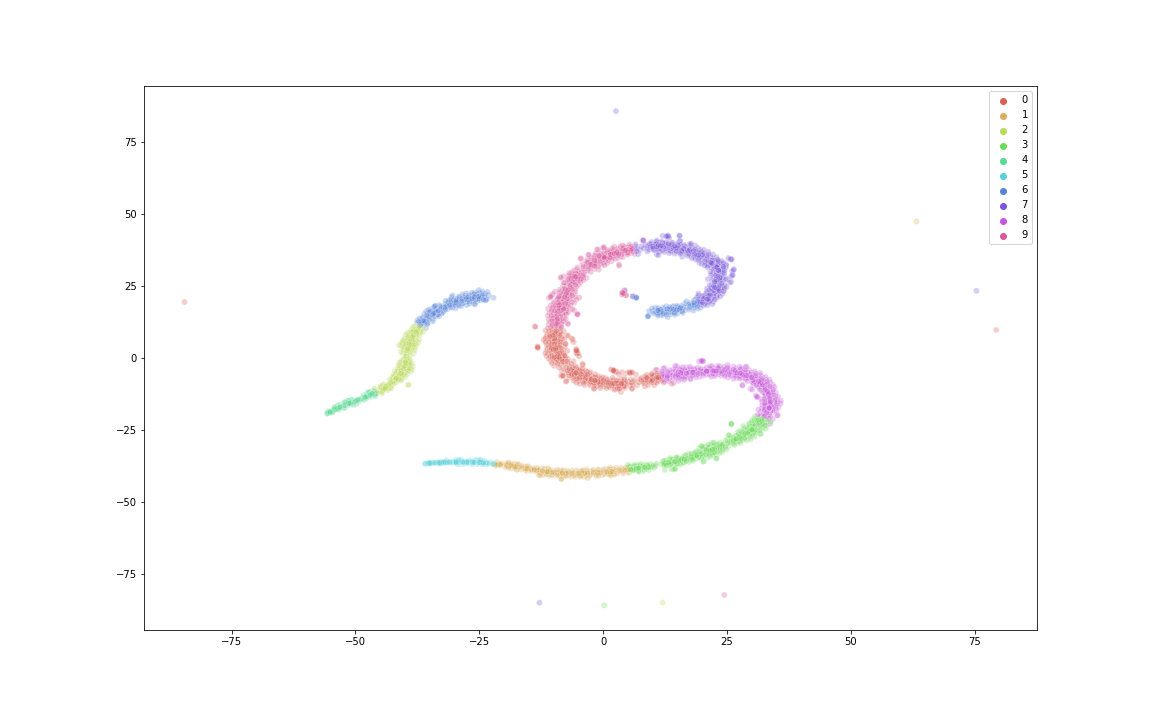}
		\caption{Visualization of labeling with K-means clustering for E2006}
		\label{subfig3}
	\end{figure}
	
	\begin{figure}[t]
		\centering
		\includegraphics[width=\linewidth]{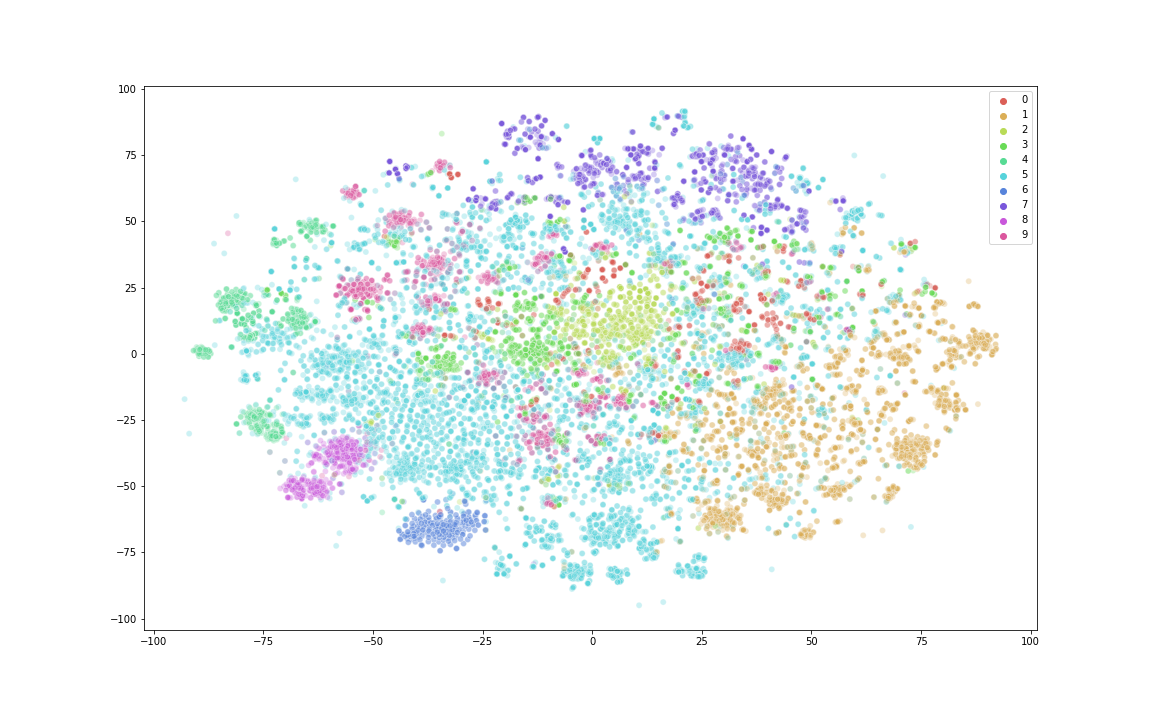}
		\caption{Visualization of labeling with K-means clustering for RCV1}
		\label{subfig4}
	\end{figure}

	\subsection{Benchmark Datasets}
	\begin{figure*}[t]
		\centering
		\includegraphics[width= \linewidth]{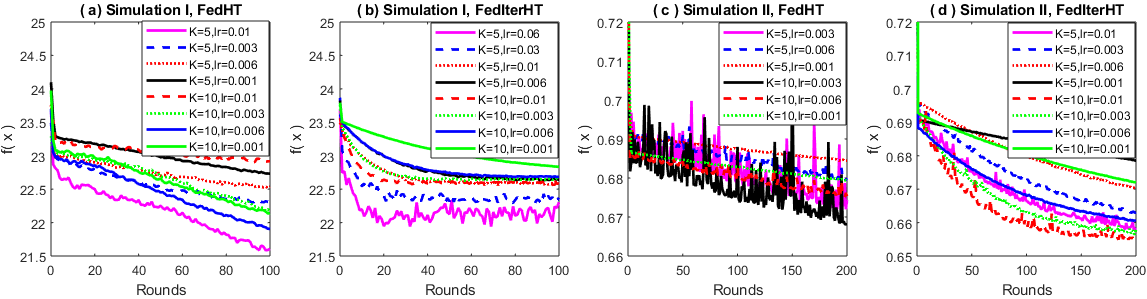}
		\caption{The objective function value vs. communication rounds for regression (a, b) and classification (c, d), and for Fed-HT (a, c) and FedIter-HT (b, d) with varying values of $K$ and stepsize/learning rate (lr) $\gamma$. }
		\label{fig:simulation1}
	\end{figure*}
	
	\begin{figure*}[t]
		\centering
		\includegraphics[width=\linewidth]{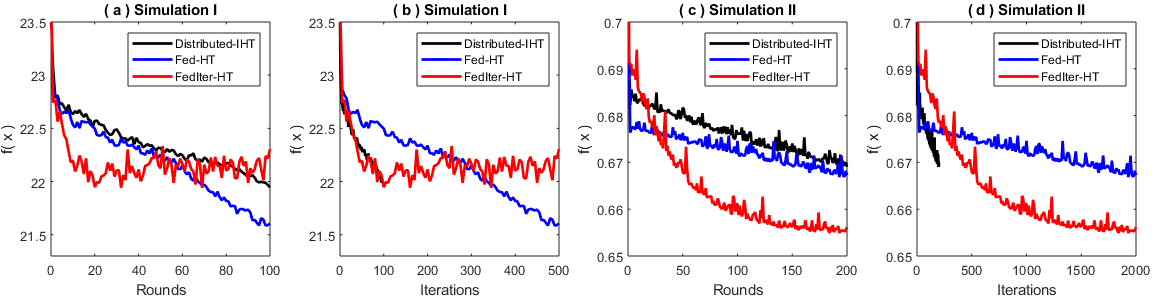}
		\caption{The comparison of different algorithms in terms of the objective function value vs. communication rounds (a, c) and vs. all internal iterations (b, d), and for regression (a, b) and classification (c, d). Note that the distributed IHT is the baseline method that communicates every local update (so the number of rounds equals the number of iterations) and may be the best scenario for reducing the objective value. We observe that in simulation I, the Fed-HT and FedIter-HT only need, respectively, 60 ($\sim 1.7\times$ less) and 20 ($\sim 5\times$ less) communication rounds  to reach the same objective value that the Distributed-IHT takes 100 rounds; in simulation II, the FedIter-HT needs 50 communication rounds ($\sim 4\times$ less) to achieve the same objective value that the Distributed-IHT takes 200 rounds. Although the proposed methods use more internal iterations in (b,d) than that of the Distributed-HT, they are at least 1.6 times faster due to the communication efficiency, if we further assume that clients can be anywhere around the world, for which the average network delay is about 150 ms, whereas the local computation may only take 20 us.}
		\label{fig:simulation2}
	\end{figure*}
	
	\begin{figure*}[t]
		\centering
		\centering
		\includegraphics[width=0.8\linewidth]{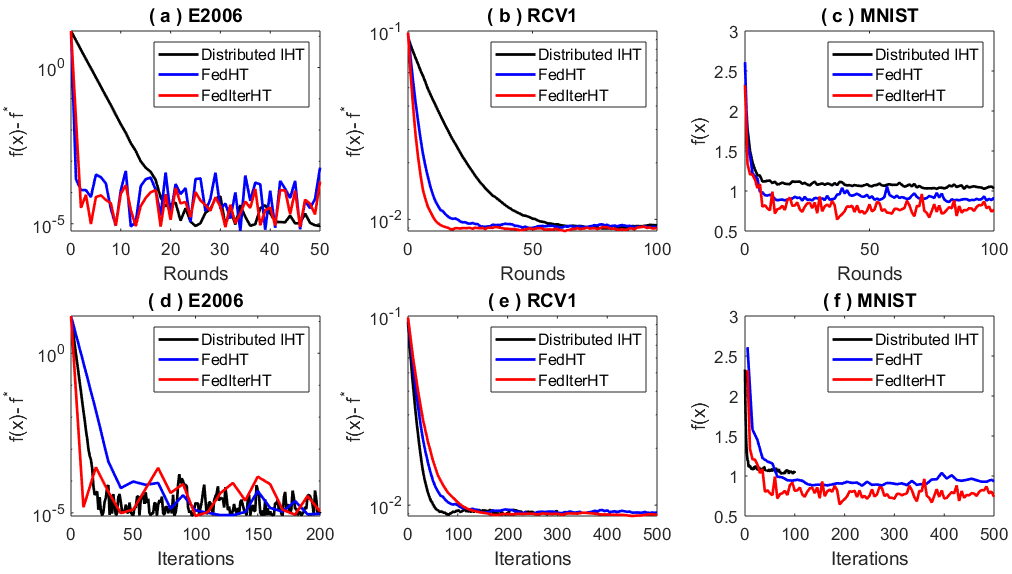}
		\caption{Comparison of the algorithms on different datasets in terms of the objective function value vs. communication rounds (top) and vs. all internal iterations (bottom). $f^*$ is a lower bound of $f(x)$. FedIter-HT performs consistently better across all datasets, which confirms our theoretical result.} 
		\label{fig:realworld}
	\end{figure*}
	
	We use the E2006-tfidf dataset  \cite{kogan2009predicting} to predict the volatility of stock returns based on the SEC-mandated financial text report, represented by tf-idf. It was collected from thousands of publicly traded U.S. companies, for which data from different companies are inherently non-identical and the privacy consideration for financial data demands federated learning. The RCV1 dataset  \cite{lewis2004rcv1} is used to predict categories of newswire stories recently collected by Reuters. Ltd. The RCV1 can be naturally partitioned based on news category and used for federated learning experiments, since readers may only be interested in one or two categories of news and the model training process will mimic the personalized privacy-preserving news recommender system, for which reader history is located on a user's personal devices. For these two datasets, we first run K-means to obtain 10 clusters and use t-SNE to reveal the hidden structures we find with the clustering method. We use the digits to label the MNIST images. Then for all datasets, the data in each category are evenly partitioned into 20 parts, and each client randomly picks 2 categories and selects one part from each of the categories. Because the MNIST images are evenly collected for each digit, the partitioned decentralized MNIST data are balanced in terms of categories, whereas the other two datasets are unbalanced.

	Figure \ref{fig:realworld} (top) shows that the proposed  Fed-HT and FedIter-HT can significantly reduce the communication rounds required to achieve a given accuracy, though they take the cost of running additional internal iterations as shown in Figure \ref{fig:realworld} (bottom).  In Figure \ref{fig:realworld} (a,c), we further observe that federated learning displays more  randomness, when approaching to the optimal solution. This may be caused by dissimilarity across clients. 
	For instance, the three different algorithms in Figure \ref{fig:realworld} (c) reach the neighborhood of different solutions at the end where the proposed FedIter-HT obtains the lowest objective value. These behaviors may be worth further exploring in the future.

	\section{Conclusion}
	In this paper, we propose two communicate-efficient IHT methods - Fed-HT and FedIter-HT - to deal with nonconvex sparse learning with decentralized non-IID data. The Fed-HT algorithm is designed to impose a hard thresholding operator at a central serve, whereas the FedIter-HT applies this operator at each update no matter at local clients or a central server. Both methods reduce communication costs - in both the communication rounds and the communication load at each round. Theoretical analysis shows a linear convergence rate for both of the algorithms where the Fed-HT has a better reduction factor $\theta$ in each iteration but the FedIter-HT has a better statistical estimation bias. Even with the decentralized non-IID data, there is still a guarantee to recover the optimal sparse estimator, in a similar way to the traditional IHT methods with IID data.  Empirical results demonstrate that they outperform the standard distributed IHT in simulations and on benchmark datasets.

	\bibliographystyle{IEEEtran}
	\bibliography{FedLearning} 
	\newpage
	\onecolumn
	\appendix

	\section{Distributed IHT Algorithm}
	\begin{algorithm}[H]
		\captionof{algorithm}{Distributed-IHT}
		\label{alg:distributed-IHT}
		\begin{algorithmic}
			\State {\bfseries Input:} Learning rate $\gamma_t$, number of workers $N$.
			\State {\bfseries Initialize} $x_{0}$
			\For {$t = 0$ to $T-1$}
			\For {worker $i=1$ to $N$ parallel}
			\State Receive $x_{t}^{(i)} = x_{t}$ from the central server
			\State Calculate unbiased stochastic gradient direction $v_{t}^{(i)}$ on worker $i$
			\State  Locally update: $x_{t+1}^{(i)} = x_{t}^{(i)} - \gamma_t v_{t}^{(i)} $ 
			\State Send $x_{t+1}^{(i)}$ to the central server
			\EndFor
			\State Receive all local updates and average on remote server: $x_{t+1} = \mathcal{H}_\tau( \sum_{i=1}^N p_i x_{t+1}^{(i)})$  
			\EndFor
		\end{algorithmic}
	\end{algorithm}
	
	\section{More Experiment Details}
	In more detail, experiments for simulation I and real data  E2006-tfidf dataset are done with sparse linear regression, 
	\begin{equation*}
		\min_{x\in \mathbb{R}^d}f(x) =  \frac{1}{N}\sum_{i=1}^N \frac{1}{B^{(i)}} \|Y^{(i)} - Z^{(i)}x \|_2^2 , \;\;\;\text{subject to} \;\; \| x \|_0 \leq \tau.
	\end{equation*}
	Experiments for simulation II and real data RCV1 are done with sparse logistic regression
	\begin{align*}
		\underset{x}{\min} f( x ) = \frac{1}{N}\sum_{i=1}^{N}\frac{1}{B^{(i)}}\sum_{j=1}^{B^{(i)}}(\log (1+exp(y_{i,j}z_{i,j}^{T}x)) + \frac{\lambda}{2}\|x\|^2),
		\;\;\;\text{subject to } \;\; \| x \|_0 \leq \tau.
	\end{align*}
	The last experiment is for MNIST data with multi-class softmax regression problem as follows:
	\begin{align*}
		&\underset{x}{\min} \{f( x ) = \frac{1}{N}\sum_{i=1}^N\frac{1}{B^{(i)}}\sum_{j=1}^{B^{(i)}}( \sum_{r=1}^{c}(-\mathbb{I} ( y_{i,j} = r )\log( \frac{\exp( z_{i,j}^T x_r)}{\sum_{l=1}^{c}\exp( z_{i,j}^Tx_l)}) + \frac{\lambda}{2}\|x_r\|^2))\}, \\&\text{subject to } \;\; \| x_r \|_0 \leq \tau ,\;\; \forall  r \in \{1,2,...,c\}.\nonumber
	\end{align*}
	
	\begin{lemma} 
		A differentiable convex function $f_i( x ): \mathbb{R}^d \rightarrow \mathbb{R}$ is restricted $l_s$-strongly smooth with parameter s, i.e. there exists a generic constant $L_s > 0$ such that for any $x_1$, $x_2$ with $\| x_1 - x_2\|_0 \leq s $ and 
		$$f_i( x_1) - f_i( x_2) - \langle\nabla f_i( x_2), x_1 - x_2\rangle \leq \frac{L_s}{2}\| x_1 - x_2 \|^2,$$ then we have:
		$$\|\nabla f_i( x_1) - \nabla f_i( x_2)\|^2 \leq 2 l_s ( f_i( x_1) - f_i( x_2) + \langle \nabla f_i( x _2), x_2 - x_1\rangle).$$
		This is also true for global smoothness parameter $l_d$.
	\end{lemma}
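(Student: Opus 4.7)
The plan is to prove this as a standard co-coercivity-type result derived from convexity plus smoothness. First I would introduce the auxiliary function $\phi(x) := f_i(x) - f_i(x_2) - \langle \nabla f_i(x_2), x - x_2 \rangle$, which by convexity of $f_i$ is nonnegative everywhere, satisfies $\phi(x_2) = 0$ and $\nabla \phi(x_2) = 0$, so that $x_2$ is a global minimizer of $\phi$. Since $\phi$ and $f_i$ differ only by an affine term, $\phi$ inherits the $l_s$-restricted smoothness of $f_i$ and has $\nabla \phi(x) = \nabla f_i(x) - \nabla f_i(x_2)$.

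Next I would apply the smoothness inequality to $\phi$ at the pair $(x_1, y)$ with the gradient step $y := x_1 - \tfrac{1}{l_s}\nabla \phi(x_1)$. A direct expansion yields $\phi(y) \leq \phi(x_1) - \tfrac{1}{2 l_s}\|\nabla \phi(x_1)\|^2$. Combining with the global lower bound $\phi(y) \geq \phi(x_2) = 0$ gives
\[ \|\nabla \phi(x_1)\|^2 \;\leq\; 2 l_s\,\phi(x_1), \]
which, after unfolding the definitions of $\phi$ and $\nabla \phi$, is precisely the claimed bound
\[ \|\nabla f_i(x_1) - \nabla f_i(x_2)\|^2 \;\leq\; 2 l_s \bigl( f_i(x_1) - f_i(x_2) + \langle \nabla f_i(x_2), x_2 - x_1 \rangle \bigr). \]

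The main technical obstacle is that the step $y - x_1 = -\tfrac{1}{l_s}\nabla \phi(x_1)$ is generically dense, so in the strictly restricted regime $s < d$ one cannot invoke restricted $l_s$-smoothness directly at the pair $(x_1, y)$. This is exactly why the lemma concludes that the statement ``is also true for the global smoothness parameter $l_d$'': taking $s = d$ makes the sparsity constraint on $\|y - x_1\|_0$ vacuous and the gradient-step argument goes through verbatim, which is all that is needed by the subsequent analysis of Fed-HT. For a genuinely sparsity-restricted version, one would replace the ideal gradient step by a sparse surrogate whose support sits inside $\mathrm{supp}(x_1) \cup \mathrm{supp}(x_2)$ together with the top coordinates of $\nabla \phi(x_1)$, absorbing a controllable constant factor; since the paper only invokes the $l_d$ form downstream, the clean global argument suffices.
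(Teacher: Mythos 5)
Your proof is correct and is essentially the paper's own argument: the paper defines the same auxiliary function $\phi$ (up to an additive constant), uses $\nabla\phi(x_2)=0$ to identify $x_2$ as the global minimizer, and obtains $\phi(x_2)\le\phi(x_1)-\tfrac{1}{2l_s}\|\nabla\phi(x_1)\|^2$ by minimizing the quadratic upper bound over $v$, which is exactly your explicit gradient step. Your caveat about the dense step $y-x_1=-\tfrac{1}{l_s}\nabla\phi(x_1)$ is well taken and applies equally to the paper's proof, whose $\min_v$ is only licensed by restricted smoothness when $\|v-y\|_0\le s$; strictly speaking only the $l_d$ (global) version is established this way, and where the paper later needs the $l_s$ constant it in fact invokes the support-projected variant $\|\pi_{\mathcal{I}}(\nabla f_i(x_1)-\nabla f_i(x_2))\|^2\le 2l_s(\cdots)$, which is precisely what your proposed sparse surrogate step would deliver.
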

	
	\begin{proof}
		Let $\phi ( y ) = f_i( y) - \langle \nabla f_i( x ), y \rangle $, then $\phi( y)$ is restricted $l_s$-strongly smooth with parameter $s$ too.
		
		\begin{align}
			\phi ( x ) & = \min_v \phi( v )  \label{eq:6}\\
			& \leq \min_v \{ \phi{(y)}  + \langle \nabla \phi( y), v- y \rangle + \frac{l_s}{2}\|v - y\|^2 \} \label{eq:7}\\
			& = \phi{(y)} - \frac{1}{2L_s} \| \nabla \phi( y ) \|^2
		\end{align}
		where the equality (\ref{eq:6}) is due to $\nabla \phi( x ) = 0$; inequality (\ref{eq:7}) is due to restricted $l_s$-strongly smoothness.
		
		Let $y = x_1$ and $x = x_2$ and reorganize, we have 
		$$\|\nabla f_i( x_1) - \nabla f_i( x_2)\|^2 \leq 2 l_s ( f_i( x_1) - f_i( x_2) + \langle \nabla f_i( x _2), x_2 - x_1\rangle).$$
		
		Also, for global smoothness parameter $l_d$, we have
		$$\|\nabla f_i( x_1) - \nabla f_i( x_2)\|^2 \leq 2 l_d ( f_i( x_1) - f_i( x_2) + \langle \nabla f_i( x _2), x_2 - x_1\rangle).$$
	\end{proof}
		
	\subsection{Proof of Theorem 3.1.}
	\begin{proof}
		For Fed-HT Algorithm:
		\begin{align}
			E[\|&x_{t+1} - x^*\|^2] = E\|\mathcal{H}_{\tau}(\sum_{i=1}^N p_i x_{t,K}^{(i)}) - x^* \|^2] \nonumber\\
			&\leq (1+\alpha)E[\|\sum_{i=1}^N p_i x_{t,K}^{(i)} - x^* \|^2] \label{eq:h11}\\
			&= (1+\alpha)E[\|\sum_{i=1}^N p_i  x_{t,K}^{(i)} - \sum_{i=1}^N p_i x^* \|^2] \label{eq:pi1}\\
			&\leq (1+\alpha) \sum_{i=1}^N p_iE^{(i)}[\|  x_{t,K}^{(i)} - x^* \|^2] \label{eq:ieq1}.
		\end{align}
		
		Equation (\ref{eq:h11}) holds due to Lemma \ref{lemma:HT},   (\ref{eq:pi1}) holds because $\sum_{i=1}^N p_i = 1$,  (\ref{eq:ieq1}) holds due to Jensen's Inequality and the sampling procedures across different clients are independent to each other.
		
		We calculate the stochastic gradient which is essential in local update, we split stochastic gradient into 3 terms.
		Note that the last inequality holds due to bounded variance on support assumption and the inequality 
		$\|\nabla f_i(x_t) -\nabla f_i(x^*) \|^2 \leq 2l_d (f_i(x_t) - f_i(x^*) + \langle \nabla f_i(x^*), x_t -x^* \rangle)$.
		\begin{align}
			&\sum_{i=1}^N p_iE^{(i)}[\| g_{t,K-1}^{(i)}\|^2 ] \label{eq:6}\\
			&= \sum_{i=1}^N p_i E^{(i)}[\| g_{t,K-1}^{(i)} -  \nabla f_i(x_{t,K-1}^{(i)}) \nonumber \\
			&+ \nabla f_i(x_{t,K-1}^{(i)}) - \nabla f_i(x^*) +  \nabla f_i(x^*)\|^2] \nonumber\\
			&\leq 3\sum_{i=1}^N p_i E^{(i)}[\| g_{t,K-1}^{(i)} - \nabla f_i(x_{t,K-1}^{(i)})\|^2] \nonumber&\\&+ 3 \sum_{i=1}^N p_iE^{(i)}[ \| \nabla f_i(x_{t,K-1}^{(i)}) - \nabla f_i(x^*) \|^2] + 3 \sum_{i=1}^N p_i \|\nabla f_i(x^*) \|^2 \nonumber \\
			&\leq 3\sum_{i=1}^N p_i \frac{\sigma_i^2}{b_t} + 3 \sum_{i=1}^N p_i \|\nabla f_i(x^*) \|^2 \nonumber&\\&+ 6l_d \sum_{i=1}^N p_i  E^{(i)}[( f_i(x_{t,K-1}^{(i)}) - f_i(x^*) + \langle \nabla f_i(x^*), x_{t,K-1}^{(i)} - x^* \rangle)]. \nonumber
		\end{align}

		Next we want to build the connection of $\sum_{i=1}^N p_i E^{(i)}[\| x_{t,K}^{(i)} - x^* \|^2]$ and $\sum_{i=1}^N p_i E^{(i)}[\| x_{t,K-1}^{(i)} - x^* \|^2]$. Let  $\gamma_t = \frac{1}{6l_d}$. Consider the inner loop iteration:
		
		\begin{align*}
			&\sum_{i=1}^N p_i E^{(i)}[\| x_{t,K}^{(i)} - x^* \|^2]\\
			&= \sum_{i=1}^N p_i E^{(i)}[\| x_{t,K-1}^{(i)} - \frac{1}{6l_d} g_{t,K-1}^{(i)} - x^* \|^2] 
		\end{align*}
		
		\begin{align*}
			&= \sum_{i=1}^N p_i E^{(i)}[\| x_{t,K-1}^{(i)} - x^* \|^2] + \frac{1}{36l_d^2} \sum_{i=1}^N p_i E^{(i)}[\|  g_{t,K-1}^{(i)}\|^2 ]\\
			&- \frac{1}{3l_d}\sum_{i=1}^N p_i E^{(i)}[ \langle  x_{t,K-1}^{(i)} - x^*, g_{t,K-1}^{(i)} \rangle]\\
			&\leq \sum_{i=1}^N p_i E^{(i)}[\| x_{t,K-1}^{(i)} - x^* \|^2] + \frac{1}{36l_d^2} \sum_{i=1}^N p_i E^{(i)}[ \| g_{t,K-1}^{(i)}\|^2]\\
			&- \frac{1}{3l_d}\sum_{i=1}^N p_i E^{(i)}[f_i(x_{t,K-1}^{(i)}) - f_i(x^*)].
		\end{align*}
		
		Plug in (\ref{eq:6}), we further derive
		
		\begin{align*}
			&\sum_{i=1}^N p_i E^{(i)}[\| x_{t,K}^{(i)} - x^* \|^2]\\
			&\leq \sum_{i=1}^N p_i E^{(i)}[\| x_{t,K-1}^{(i)} - x^* \|^2] \\& + \frac{1}{36l_d^2}   (3\sum_{i=1}^N p_i \frac{\sigma_i^2}{b_t} + 6l_d \sum_{i=1}^N p_i  E^{(i)}[ f_i(x_{t,K-1}^{(i)}) - f_i(x^*)\\
			&+ { \langle \nabla f_i(x^*), x_{t,K-1}^{(i)} - x^* \rangle]} 
			+ 3 \sum_{i=1}^N p_i \|\nabla f_i(x^*) \|^2 )\\
			&- \frac{1}{3l_d}\sum_{i=1}^N p_i E^{(i)}[f_i(x_{t,K-1}^{(i)}) - f_i(x^*)]\\
			&=\sum_{i=1}^N p_i E^{(i)}[\| x_{t,K-1}^{(i)} - x^* \|^2] + \frac{1}{12l_d^2} \sum_{i=1}^N p_i \frac{\sigma_i^2}{b_t}\\&- \frac{1}{6l_d}\sum_{i=1}^N p_i E^{(i)}[f_i(x_{t,K-1}^{(i)}) - f_i(x^*)]\\ &+\frac{1}{6l_d}\sum_{i=1}^N p_i E^{(i)}[\langle \pi_I(\nabla f_i(x^*)), x_{t,K-1}^{(i)} - x^* \rangle] \\& + \frac{1}{12l_d^2} \sum_{i=1}^N p_i\|\nabla f_i(x^*) \|^2 \\
			&\leq \sum_{i=1}^N p_i E^{(i)}[\| x_{t,K-1}^{(i)} - x^* \|^2] + \frac{1}{12l_d^2} \sum_{i=1}^N p_i \frac{\sigma_i^2}{b_t} \\&- \frac{1}{6l_d}\sum_{i=1}^N p_i E^{(i)}[\langle \pi_I(\nabla f_i(x^*)), x_{t,K-1}^{(i)} - x^* \rangle \\
			&+ \frac{\rho_d}{2}\| x_{t,K-1}^{(i)} - x^* \|^2]+ \frac{1}{12l_d^2} \sum_{i=1}^N p_i\|\nabla f_i(x^*) \|^2 \\&+\frac{1}{6l_d}\sum_{i=1}^N p_i E^{(i)}[\langle \pi_I(\nabla f_i(x^*)), x_{t,K-1}^{(i)} - x^* \rangle] \\
		\end{align*}
		\begin{align*}
			&= (1-\frac{1}{12\kappa_d})\sum_{i=1}^N p_i E^{(i)}[\| x_{t,K-1}^{(i)} - x^* \|^2 ]\\&+ \frac{1}{12l_d^2} \sum_{i=1}^N p_i \frac{\sigma_i^2}{b_t} + \frac{1}{12l_d^2} \sum_{i=1}^N p_i\|\nabla f_i(x^*) \|^2,
		\end{align*}
		where the last inequality holds due to restricted strongly convexity and $\kappa_d = \frac{l_d}{\rho_d}$. Then for $k \in \{0,1,..., K-1\}$, we have
		\begin{align*}
			&\sum_{i=1}^N p_i E^{(i)}[\| x_{t,K}^{(i)} - x^* \|^2]\\
			&\leq (1-\frac{1}{12\kappa_d})^{K} \sum_{i=1}^N p_i E^{(i)}[\| x_{t,0}^{(i)} - x^* \|^2]
			\\&+\sum_{k=0}^{K-1}(1-\frac{1}{12\kappa_d})^{k} \frac{1}{12l_d^2} \sum_{i=1}^N p_i\frac{\sigma_i^2}{b_t}\\
			&+\sum_{k=0}^{K-1}(1-\frac{1}{12\kappa_d})^{k} \frac{1}{12l_d^2} \sum_{i=1}^N p_i\|\nabla f_i(x^*) \|^2\\
			&\leq (1-\frac{1}{12\kappa_d})^{K} \sum_{i=1}^N p_i E^{(i)}[\| x_{t,0}^{(i)} - x^* \|^2]
			\\&+\sum_{k=0}^{K-1}(1-\frac{1}{12\kappa_d})^{k} \frac{1}{12l_d^2} 
			\sum_{i=1}^N p_i ( \frac{\sigma_i^2}{b_t} + \|\nabla f_i(x^*) \|^2).
		\end{align*}
		
		Let $\psi_1 =  (1+\alpha) (1-\frac{1}{12\kappa_d})^{K}$ and $\xi_1=\frac{(1+ \alpha)(1 - (1-  \frac{1}{12\kappa_d})^K) \kappa_d }{  l_d^2}$. 
		Then, we have
		\begin{align*} 
			&E[\|x_{t+1} - x^*\|^2] \\
			&\leq (1+\alpha)  (1-\frac{1}{12\kappa_d})^{K-1} \sum_{i=1}^N p_iE^{(i)}[ \| x_{t,0}^{(i)} - x^* \|^2]\\
			&+  \frac{(1+ \alpha)(1 - (1-  \frac{1}{12\kappa_d})^K) \kappa_d }{  l_d^2} \sum_{i=1}^N p_i ( \frac{\sigma_i^2}{b_t} + \|\nabla f_i(x^*) \|^2)\\
			&= \psi_1 \sum_{i=1}^N p_iE^{(i)}[ \| x_{t,0}^{(i)} - x^* \|^2] + \frac{\xi_1 \sum_{i=1}^N p_i \sigma_i^2}{b_t} \\
			&+ \xi_1 \sum_{i=1}^N p_i  \|\nabla f_i(x^*) \|^2.
		\end{align*}
		
		Since $x_{t,0} = x_t$, we derive the relation between $\|x_{t+1} - x^*\|^2$ and $\|x_{t} - x^*\|^2$,
		\begin{align*} 
			E[\|x_{t+1} - x^*\|^2] &\leq   \psi_1 E[\|x_t - x^*\|^2]\\&+ \frac{\xi_1 \sum_{i=1}^N p_i \sigma_i^2}{b_t} + \xi_1 \sum_{i=1}^N p_i  \|\nabla f_i(x^*) \|^2.
		\end{align*}
		
		We further set $b_t = \frac{\Gamma_1}{\omega_1^t}$ and assume $\Gamma_1$ is large enough such that
		\begin{align*}
			\upsilon := \frac{\xi_1 \sum_{i=1}^N p_i \sigma_i^2}{\Gamma_1} \leq \delta_1\|x_0 - x^*\|^2,
		\end{align*}
		where $\delta_1$ is a positive constant and will be set later. 
		
		With mathematical induction, we want to prove for $\theta_1 \in (0, 1)$, the following inequality holds.
		\begin{align*} 
			E[&\|x_{t} - x^*\|^2] \\&\leq   \theta_1^t E[\|x_0 - x^*\|^2] + \frac{\xi_1}{1-\psi_1} \sum_{i=1}^N p_i  \|\nabla f_i(x^*) \|^2.
		\end{align*}
		
		When $t=0$, the above inequality is true. Now we assume that for $k = t$,  it holds. Then for $k=t+1$, we have
		\begin{align*}
			&E[\|x_{t+1} - x^*\|^2] \\&\leq   \psi_1 E[\|x_t - x^*\|^2]+ \frac{\xi_1 \sum_{i=1}^N p_i \sigma_i^2}{b_t} + \xi_1 \sum_{i=1}^N p_i  \|\nabla f_i(x^*) \|^2\\
			&\leq \psi_1 E[\|x_t - x^*\|^2]+ \omega_1^t\delta_1\|x_0 - x^*\|^2 + \xi_1 \sum_{i=1}^N p_i  \|\nabla f_i(x^*) \|^2\\
			&\leq  (\psi_1 \theta_1^t + \delta_1 \omega_1^t) E[\|x_0 - x^*\|^2]\\&+ (\frac{\psi_1}{1-\psi_1} + 1)\xi_1 \sum_{i=1}^N p_i  \|\nabla f_i(x^*) \|^2\\
			&\leq  (\psi_1 \theta_1^t + \delta_1 \omega_1^t) E[\|x_0 - x^*\|^2]+ \frac{\xi_1 }{1-\psi_1} \sum_{i=1}^N p_i  \|\nabla f_i(x^*) \|^2.
		\end{align*}
		Let $\theta_1 = \omega = \psi_1 + \delta_1$, we get
		\begin{align*}
			E[&\|x_{t+1} - x^*\|^2] \\&\leq   \theta_1^{t+1} E[\|x_0 - x^*\|^2] + \frac{\xi_1}{1-\psi_1} \sum_{i=1}^N p_i  \|\nabla f_i(x^*) \|^2\\
			&\leq   \theta_1^{t+1} E[\|x_0 - x^*\|^2]  + \frac{\xi_1\mathcal{B}^2 }{1-\psi_1}   \|\nabla f(x^*) \|^2.
		\end{align*}
		Further more, there exists a large $\Gamma_1 \geq \frac{\xi_1 \sum_{i=1}^N p_i \sigma_i^2}{\delta_1\|x_0 - x^*\|^2}$, such that $\delta_1=\alpha (1-\frac{1}{12\kappa_d})^{K}$. Therefore, we have $\omega_1 = \theta_1 = \psi_1 + \delta_1 = (1+2\alpha) (1-\frac{1}{12\kappa_d})^{K} < 1$. Then we can derive the restriction on sparse parameter $\tau \geq (16 (12 \kappa_d-1)^2+1)  \tau^*$.
	\end{proof}

	\subsection{Proof of Corollary 3.1.2.}
	
	\begin{proof}
		In next stage, we use previous upper bound for $E[\|x_{T} - x^*\|^2]$ and $l_d$-restricted strongly smooth conditions to establish epoch-based convergence of $f( x_{T}) - f( x^*) $.
		
		We first use $l_s$-restricted strongly smooth conditions and $\langle a, b\rangle \leq \frac{1}{2} \|a\|^2 + \frac{1}{2} \|b\|^2$ and obtain:
		\begin{align}
			f( x_{T}) &
			\leq f( x^*) +   \langle \nabla f( x^*), x_{T} - x^* \rangle + \frac{l_d}{2} \| x_{T} - x^*\|^2\nonumber \\
			& =  f( x^*) +   ( \langle \nabla f( x^*), x_{T} - x^* \rangle) + \frac{l_d}{2} \| x_{T} - x^*\|^2 \nonumber \\
			& \leq f( x^*) +  \frac{ 1 }{2l_d} \|( \nabla f( x^*))\|^2 \nonumber\\
			& + \frac{l_d}{2  }\|x_{T} - x^* \|^2 + \frac{l_d}{2} \| x_{T} - x^*\|^2 \nonumber \\
			&=  f( x^*) +  \frac{ 1 }{2l_d} \|( \nabla f( x^*))\|^2 + l_d \| x_{T} - x^*\|^2. \nonumber
		\end{align}
		
		Take expectation on both sides, 
		\begin{align*}
			E[ f( x_{T}) - f( x^*)]=   \frac{ 1 }{2l_d} \|( \nabla f( x^*))\|^2 + l_d E[\| x_{T} - x^*\|^2].
		\end{align*}
		
		From the upper bound of $E[\|x_{T} - x^*\|^2]$,
		\begin{align*}
			E[\|x_{T} - x^*\|^2] &\leq  \theta_1^{T} \|x_0 - x^*\|^2+   \frac{\xi_1}{1-\psi_1} \|\nabla f(x^*) \|^2.
		\end{align*}
		
		We can get the final convergence result:
		\begin{align*}
			E[f( x_{T}) - f( x^*)]  & \leq \frac{ 1 }{2l_d} \|( \nabla f( x^*))\|^2 + l_d E[ \| x_{T} - x^*\|^2] \\
			&\leq \theta_1^{T-1} l_d \|x_0 - x^*\|^2+  (  \frac{\xi_1}{1-\psi_1} + \frac{1}{2l_d}) \|\nabla f(x^*) \|^2 \\
			&= \theta_1^{T-1}\Delta_1 + g_2( x^*),
		\end{align*}
		where $\Delta_1 = l_d \|x_0 - x^*\|^2$, $g_2( x^*)=  ( \frac{\xi_1}{1-\psi_1} + \frac{1}{2l_d}) \|\nabla f(x^*) \|^2  = O(\|\nabla f(x^*) \|^2)$.
		
	\end{proof}
	
	\subsection{Proof of the Theorem 4.1.}
	Then we do analysis on our FedIter-HT Algorithm.
	\begin{proof}
		For FedIter-HT Algorithm:
		\begin{align}
			E[\|x_{t+1} - x^*\|^2] &= E[\|\mathcal{H}_{\tau}(\sum_{i=1}^N p_i x_{t,K}^{(i)}) - x^* \|^2] \nonumber\\
			&\leq (1+\alpha)E[\|\sum_{i=1}^N p_i x_{t,K}^{(i)} - x^* \|^2] \label{eq:h13}\\
			&= (1+\alpha)  \sum_{i=1}^N p_i E^{(i)}[\| x_{t,K}^{(i)} - x^* \|^2] \label{eq:ieq3}
		\end{align}
		Eq (\ref{eq:h13}) hold due to Lemma 2.1 , Eq (\ref{eq:ieq3}) holds due to Jensen Inequality. 
		
		This time we calculate the stochastic gradient on support, which is different from the analysis of Fed-HT Algorithm.
		
		We also split stochastic gradient on support into 3 terms,
		\begin{align}
			&\sum_{i=1}^N p_i E^{(i)}[\| \pi_{\mathcal{I}^{(i)}} (g_{t,K-1}^{(i)})\|^2 ]\nonumber\\
			&= \sum_{i=1}^N p_i E^{(i)}[\| \pi_{\mathcal{I}^{(i)}} (g_{t,K-1}^{(i)} -  \nabla f_i(x_{t,K-1}^{(i)})+ \nabla f_i(x_{t,K-1}^{(i)}) - \nabla f_i(x^*) +  \nabla f_i(x^*))\|^2] \nonumber\\
			&\leq 3\sum_{i=1}^N p_i E^{(i)}[\| \pi_{\mathcal{I}^{(i)}} (g_{t,K-1}^{(i)} - \nabla f_i(x_{t,K-1}^{(i)}))\|^2] + 3 \sum_{i=1}^N p_iE^{(i)}[ \| \pi_{\mathcal{I}^{(i)}} (\nabla f_i(x_{t,K-1}^{(i)}) - \nabla f_i(x^*)) \|^2] \nonumber \\
			&+ 3 \sum_{i=1}^N p_i \|\pi_{\mathcal{I}^{(i)}} (\nabla f_i(x^*)) \|^2 \nonumber \\
			&\leq 3\sum_{i=1}^N p_i \frac{\sigma_i^2}{b_t} + 6l_s \sum_{i=1}^N p_i E^{(i)}[ ( f_i(x_{t,K-1}^{(i)}) - f_i(x^*) + \langle \pi_{\mathcal{I}^{(i)}}(\nabla f_i(x^*)), x_{t,K-1}^{(i)} - x^* \rangle)] \nonumber \\
			&+ 3 \sum_{i=1}^N p_i \|\pi_{\mathcal{I}^{(i)}} (\nabla f_i(x^*)) \|^2 \label{eq:g},
		\end{align}
		where the last inequality holds due to bounded variance on support assumption and the inequality 
		$\|\pi_{\mathcal{I}^{(i)}} (\nabla f_i(x_t) -\nabla f_i(x^*)) \|^2 \leq 2l_s (f_i(x_t) - f_i(x^*) + \langle \pi_{\mathcal{I}^{(i)}}(\nabla f_i(x^*)), x_t -x^* \rangle)$.

		Next we want to build the connection of $\sum_{i=1}^N p_i E^{(i)}[\| x_{t,K}^{(i)} - x^* \|^2]$ and $\sum_{i=1}^N p_i E^{(i)}[\| x_{t,K-1}^{(i)} - x^* \|^2]$. Let $\gamma_t = \frac{1}{6l_s}$. Consider the inner loop iteration, 
		\begin{align*}
			\sum_{i=1}^N p_i E^{(i)}[\| x_{t,K}^{(i)} - x^* \|^2]
			&=  \sum_{i=1}^N p_iE^{(i)}[ \|\mathcal{H}_{\tau}( x_{t,K-1}^{(i)} - \frac{1}{6l_s} \pi_{\mathcal{I}^{(i)}} (g_{t,K-1}^{(i)})) - x^* \|^2] \\
			&\leq (1+\alpha)\sum_{i=1}^N p_i E^{(i)}[\| x_{t,K-1}^{(i)} - \frac{1}{6l_s} \pi_{\mathcal{I}^{(i)}} (g_{t,K-1}^{(i)}) - x^* \|^2].
		\end{align*}
		Borrow from the above result that
		\begin{align*}
			&\sum_{i=1}^N p_i E^{(i)}[ \| x_{t,K-1}^{(i)} - \frac{1}{6l_s} \pi_{\mathcal{I}^{(i)}} (g_{t,K-1}^{(i)}) - x^* \|^2]  \\
			&\leq (1-\frac{1}{12\kappa_s})\sum_{i=1}^N p_i E^{(i)}[\| x_{t,K-1}^{(i)} - x^* \|^2] + \frac{1}{12l_s^2} \sum_{i=1}^N p_i \frac{\sigma_i^2}{b_t} + \frac{1}{12l_s^2} \sum_{i=1}^N p_i\|\pi_{\mathcal{I}^{(i)}} (\nabla f_i(x^*)) \|^2,
		\end{align*}
		
		therefore, for $k \in\{0,1,..., K-1\}$, we have
		\begin{align*}
			\sum_{i=1}^N p_i E^{(i)}[\| x_{t,K}^{(i)} - x^* \|^2]
			&\leq (1+\alpha)(1-\frac{1}{12\kappa_s})\sum_{i=1}^N p_i E^{(i)}[\| x_{t,K-1}^{(i)} - x^* \|^2 ]\\
			&+   \frac{(1 - (1-  \frac{1}{12\kappa_s})^K) \kappa_s }{  l_s^2} (\sum_{i=1}^N p_i \frac{\sigma_i^2}{b_t} + \|\pi_{\mathcal{I}^{(i)}} (\nabla f_i(x^*)) \|^2)\\
		\end{align*}
		\begin{align*} 
			E[\|x_{t+1} - x^*\|^2] &\leq  (1+\alpha)^2 ((1-\frac{1}{12\kappa_s})^{K} \sum_{i=1}^N p_i E^{(i)}[\| x_{t,0}^{(i)} - x^* \|^2]
			\\
			&+   \frac{(1+\alpha)^2(1 - (1-  \frac{1}{12\kappa_s})^K) \kappa_s }{  l_s^2} (\sum_{i=1}^N p_i \frac{\sigma_i^2}{b} + \|\pi_{\mathcal{I}^{(i)}} (\nabla f_i(x^*)) \|^2).
		\end{align*}
		Similarly, we have the following result:
		
		\begin{align}
			E[\|x_{t+1} - x^*\|^2] &\leq   \theta_2^{t+1} E[\|x_0 - x^*\|^2] + \frac{\xi_2}{1-\psi_2} \sum_{i=1}^N p_i  \|\pi_{\mathcal{I}^{(i)}} (\nabla f_i(x^*)) \|^2,
		\end{align}
		where $\theta_2=(1+2\alpha)^2(1-\frac{1}{12\kappa_s})$, $\xi_2 = \frac{(1+\alpha)^2(1 - (1-  \frac{1}{12\kappa_s})^K) \kappa_s }{  l_s^2}$, $\psi_2 = (1+\alpha)^2(1-\frac{1}{12\kappa_s})$ and $b_t = \frac{\Gamma_2}{\omega_2^t}$. Further more, there exists a large $\Gamma_2 \geq \frac{\xi_2 \sum_{i=1}^N p_i \sigma_i^2}{\delta_2\|x_0 - x^*\|^2}$, such that $\delta_2=(2\alpha+2\alpha^2 )(1-\frac{1}{12\kappa_s})^{K}$. Therefore, we have $\omega_2 = \theta_2 = \psi_2 + \delta_2 = (1+2\alpha)^2 (1-\frac{1}{12\kappa_d})^{K} < 1$. Then we can derive the restriction on sparse parameter $\tau \geq (\frac{16}{( \sqrt{\frac{12 \kappa_d}{ 12\kappa_d -1}}-1 )^2}+1)  \tau^*$.

	\end{proof}

	\subsection{Proof of Corollary 4.1.2.}
	
	\begin{proof}
		In next stage, we use previous upper bound for $E[\|x_{T} - x^*\|^2]$ and $l_s$-restricted strongly smooth conditions to establish epoch-based convergence of $f( x_{T}) - f( x^*) $.
		
		We first use $l_s$-restricted strongly smooth conditions and $\langle a, b\rangle \leq \frac{1}{2} \|a\|^2 + \frac{1}{2} \|b\|^2$ and obtain:
		\begin{align}
			f( x_{T}) &
			\leq f( x^*) +   \langle \nabla f( x^*), x_{T} - x^* \rangle + \frac{l_s}{2} \| x_{T} - x^*\|^2\nonumber \\
			& =  f( x^*) +   \pi_{\mathcal{\tilde{I}}}( \langle \nabla f( x^*), x_{T} - x^* \rangle) + \frac{l_s}{2} \| x_{T} - x^*\|^2 \nonumber \\
			& \leq f( x^*) +  \frac{ 1 }{2l_s} \|\pi_{\mathcal{\tilde{I}}}( \nabla f( x^*))\|^2 \nonumber\\
			& + \frac{l_s}{2  }\|x_{T} - x^* \|^2 + \frac{l_s}{2} \| x_{T} - x^*\|^2 \nonumber \\
			&=  f( x^*) +  \frac{ 1 }{2l_s} \|\pi_{\mathcal{\tilde{I}}}( \nabla f( x^*))\|^2 + l_s \| x_{T} - x^*\|^2. \nonumber
		\end{align}
		
		Take expectation on both sides,
		\begin{align*}
			E[f( x_{T}) - f( x^*) ] =  \frac{ 1 }{2l_s} \|\pi_{\mathcal{\tilde{I}}}( \nabla f( x^*))\|^2 + l_s E[\| x_{T} - x^*\|^2].
		\end{align*}
		
		From the upper bound of $E[\|x_{T} - x^*\|^2]$,
		\begin{align*}
			E[\|x_{T} - x^*\|^2] &\leq  \theta_2^{T} \|x_0 - x^*\|^2+  \frac{\xi_2}{1-\psi_2} \|\pi_{\mathcal{\tilde{I}}}( \nabla f(x^*)) \|^2. 
		\end{align*}
		
		Then we can get the final convergence result:
		\begin{align*}
			E[f( x_{T}) - f( x^*)]  & \leq \frac{ 1 }{2l_s} \|( \nabla f( x^*))\|^2 + l_s E[ \| x_{T} - x^*\|^2] \\
			&\leq  \theta_2^{T} l_s \|x_0 - x^*\|^2+  ( \frac{\xi_2}{1-\psi_2} + \frac{1}{2l_s}) \|\nabla f(x^*) \|^2 \\
			&=  \theta_2^{T}\Delta_2 + g_4( x^*)
		\end{align*}
		where $\Delta_2 = l_s \|x_0 - x^*\|^2$, $g_4( x^*)=  (\frac{\xi_2}{1-\psi_2}+ \frac{1}{2l_s}) \|\nabla f(x^*) \|^2  = O( \pi_{\mathcal{\tilde{I}}}( \|\nabla f(x^*) \|^2))$.
		
	\end{proof}
	
	\subsection{Proof of Corollary 4.1.3.}
	
	\begin{proof}
		Let { $Z = [Z^{(1)}; ...;Z^{(N)}]\in \mathbb{R}^{NB\times d}$} be the overall design matrix of the linear regression problem, and each row of $Z$ can be treated as drawn IID from a sub-Gaussian distribution with parameter $\sum_{i=1}^N \beta^{(i)}$. {$\epsilon = [\epsilon^{(1)};...; \epsilon^{(N)}]\in \mathbb{R}^{NB\times 1}$} is the random Gaussian noise. Then Lemma C.1 in \cite{wang2019differentially} immediately implies that $f_i$ is restricted $\rho_s$-strongly convex and  restricted $l_s$-strongly smooth with $\rho_s = \frac{4}{5} $ and $l_s = \frac{6}{5}$ respectively with probability at least $ (1 - \exp(-C_2 B)) $ if the total sample size $B \geq C_1 \tau  \log(d) \max_i\{(\beta^{(i)})^2\}$, where $C_1$ and $C_2 $ are universal constants. Furthermore, we know that $\|\nabla f(x^*)\|_{\infty} = \|\frac{Z^T\epsilon}{NB}\|_{\infty} \leq C_3 \sigma\sum_{i=1}^N \beta^{(i)}\sqrt{\frac{\log(d)}{NB}}$ 
		, with probability at least $(1-\exp(-C_4NB))$, 
		where $C_3, C_4$ are universal constants. Gathering everything  together yields the following bound with a high probability.
	\end{proof}
	
	\subsection{Proof of Corollary 4.1.4.}
	
	\begin{proof}
		If we further assume $\|z_{i,j}\|\leq \mathcal{K}$ and $C_{lower} \leq \exp(z_{i,j}^{T}x)/(1+\exp(z_{i,j}^{T}x))^2 \leq C_{upper}$ for $ i \in [N]$ and $j \in [B]$, the sparse logistic regression objective function is restricted $\rho_s$-strongly convex and  restricted $l_s$-strongly smooth with $\rho_s = \frac{4}{5} C_{lower} $ and $l_s = \frac{6}{5} C_{upper}$ respectively with a probability at least $ (1 - \exp(-C_6 B)) $ if $B \geq C_7 \tau \mathcal{K}^2 log(d)$, where $C_{lower}$, $C_{upper}$, $C_6$ and $C_7 $ are constants. Furthermore, according to Corollary 2 in \cite{loh2015regularized}, we have $\|\nabla f(x^*)\|_{\infty}  \leq C_8 \mathcal{K}\sqrt{\log(d)/NB}$ with a probability at least $(1-C_9
		exp(-C_{10}log( d))$, where $C_8$, $C_9$ and $C_{10}$ are universal constants. Therefore, we can obtain the following corollary. 
		Based on the above result, the estimation error specified in terms of the distance  $x_T$ and $x^*$ decreases when the total sample size $NB$ is large, or the dissimilarity level $\mathcal{B}$ and the dimension $d$ are small.
	\end{proof}

\end{document}